\documentclass[letterpaper, 10 pt, conference]{ieeeconf}

\usepackage{balance} 
\usepackage{comment}
\usepackage{color}
\usepackage{amsmath,amsfonts}
\usepackage{algorithm}
\usepackage{algpseudocode}
\usepackage{graphicx}
\usepackage{mathtools}
\usepackage[caption=false,font=normalsize,labelfont=sf,textfont=sf]{subfig}
\usepackage{upgreek}
\usepackage{algorithmicx}
\usepackage{array}
\usepackage{float}
\usepackage{cite}
\usepackage{textcomp}
\usepackage{stfloats}
\usepackage{soul}
\usepackage{url}
\usepackage{units}
\usepackage{verbatim}
\usepackage{xr-hyper}
\usepackage{hyperref}
\usepackage{pdfpages}

\makeatletter
\def\thickhline{%
  \noalign{\ifnum0=`}\fi\hrule \@height \thickarrayrulewidth \futurelet
   \reserved@a\@xthickhline}
\def\@xthickhline{\ifx\reserved@a\thickhline
               \vskip\doublerulesep
               \vskip-\thickarrayrulewidth
             \fi
      \ifnum0=`{\fi}}
\makeatother

\newlength{\thickarrayrulewidth}
\setlength{\thickarrayrulewidth}{2\arrayrulewidth}
\newcommand{\argmax}{\operatornamewithlimits{argmax}}
\newcolumntype{P}[1]{>{\centering\arraybackslash}m{#1}}
\newtheorem{theorem}{Theorem}

\begin{document}

\title{Optimal Scalability-Aware Allocation of Swarm Robots: \\From Linear to Retrograde Performance via Marginal Gains}

\author{
Simay Atasoy Bing\"ol$^{1,2\,\dagger}$,
Tobias T\"opfer$^{1}$,
Sven Kosub$^{1}$,
Heiko Hamann$^{1,2}$,
Andreagiovanni Reina$^{1,2,3\,\dagger}$%
\\[0.5em]
$^{1}$Department of Computer and Information Science, Universit\"at Konstanz, Germany\\
$^{2}$Centre for the Advanced Study of Collective Behaviour (CASCB), Universit\"at Konstanz, Germany\\
$^{3}$Department of Collective Behaviour, Max Planck Institute of Animal Behavior, Konstanz, Germany\\
$^{\dagger}$\texttt{simay.atasoy@uni-konstanz.de}; \texttt{andreagiovanni.reina@uni-konstanz.de}
}

\maketitle

\begin{abstract}
In collective systems, the available agents are a limited resource that must be allocated among tasks to maximize collective performance. Computing the optimal allocation of several agents to numerous tasks through a brute-force approach can be infeasible, especially when each task's performance scales differently with the increase of agents.
For example, difficult tasks may require more agents to achieve similar performances compared to simpler tasks, but performance may saturate nonlinearly as the number of allocated agents increases. We propose a computationally efficient algorithm, based on marginal performance gains, for optimally allocating agents to tasks with concave scalability functions---including linear, saturating, and retrograde scaling---to achieve maximum collective performance. We test the algorithm by allocating a simulated robot swarm among collective decision-making tasks, where embodied agents sample their environment and exchange information to reach a consensus on spatially distributed environmental features. We vary task difficulties by different geometrical arrangements of environmental features in space (patchiness). In this scenario, decision performance in each task scales either as a saturating curve (following the Condorcet's Jury Theorem in an interference-free setup) or as a retrograde curve (when physical interference among robots restricts their movement). Using simple robot simulations, we show that our algorithm can be useful in allocating robots among tasks. Our approach aims to advance the deployment of future real-world multi-robot systems.

\end{abstract}
         
\newcommand{\BibTeX}{\rm B\kern-.05em{\sc i\kern-.025em b}\kern-.08em\TeX}

\section{Introduction}
Groups can outperform individuals, if appropriately sized.
Task allocation plays a crucial role in both natural and artificial collective systems, enabling groups to work together efficiently. The cornerstone of social insects' success is their extraordinary ability to organize workers into groups specialized in performing one of the tasks needed for the colony's survival (e.g., foraging, nursing, or defense)~\cite{ROBINSON2009297}. Multi-robot systems can also exploit the parallelization of work by allocating teams of robots to different tasks (e.g., resource collection, transportation, and storage)~\cite{gerkey04,berman2009optimized}.

In such systems, ensuring efficient allocation of workers---animals or robots---to different tasks is crucial to optimize collective performance. In real-world applications, workers are provided in finite numbers and allocating them optimally across multiple tasks is non-trivial. Adding workers to a task could either improve performance, lead to diminishing returns, or even degrade overall performance if the resulting group is too large to operate efficiently. Given the finite set of workers, allocating more workers to one task requires reducing the number of workers assigned to other tasks. This tradeoff in resource distribution plays a crucial role in selective attention~\cite{paulk2014selective,tipper1988negative}, through which both individual and collective organisms allocate their resources (e.g., energy, time, or workers) to the most critical tasks~\cite{sasaki2013ants}.

In this work, we propose an efficient strategy for optimally allocating~$N$ agents among~$T$ tasks characterized by different scalability functions, defining how task performance changes for different numbers of agents allocated to it (see Fig.~\ref{fig:graph_abst} for an overview of our method). Our analysis shows that as the number of agents and tasks increases, a combinatorial explosion prohibits to search exhaustively for the optimal allocation. We propose an algorithm that has reduced computational complexity (polynomial time) compared to the brute-force approach, yet generates the optimal robot allocation.
We assume that tasks are spatially separated and task switching is costly. Hence, agents (e.g., robots) must be allocated offline, that is, before they are deployed and head to their operation areas to execute the tasks in parallel.

Similar allocation processes have been investigated in several disciplines, for example, in ecology, `ideal free distribution' describes how animals distribute themselves across different habitats~\cite{fretwell1969territorial,kacelnik1992ideal,quijano2007ideal}; in swarm robotics, `task allocation' algorithms enable robots to allocate themselves to tasks as a function of the task’s requirements~\cite{lerman2006analysis, berman2009optimized,kanakia2016modeling,fleming2019recruitment,prorok2017impact}; and in game theory, `hedonic games' model the splitting of players into coalitions~\cite{jang2018anonymous, bogomolnaia2002stability}. In the literature on multi-robot task allocation, it is generally assumed that tasks require a minimum number of agents~$n_\text{min}$ to achieve successful completion~\cite{prorok2017impact,aziz2021multi, aziz2022task}. When this requirement is not met ($n<n_\text{min}$), the task is considered unhandled. 
This leads to a binary modeling of task completion~$C$, where the robots' task achievement is classified as completed ($C(n)=1$ if $n\ge n_\text{min}$) or not executed ($C(n)=0$ if $n< n_\text{min}$). Even though this abstraction simplifies analysis and algorithm design, it may overlook intermediate cases where already a single agent could implement some progress ($0<C(1)\ll 1$). To address this previous limitation, we consider a more general task allocation scenario where task progress is determined by scalability functions~$C(n)$, that is, task performance gradually varies with the number~$n$ of assigned agents.

Prior studies on resource and budget allocation have addressed diminishing returns using submodular optimization techniques~\cite{bachrach2008distributed,soma2014optimal}, primarily to tackle scalability and strategic behavior in systems with limited resources. However, these approaches are restricted to settings with monotonic diminishing returns. In contrast, our work focuses on a broader class of concave scalability functions, including non-monotonic (retrograde) cases that fall outside the scope of standard submodular optimization frameworks. 

Frequency-dependent selection (FDS) in evolutionary biology provides a complementary perspective on agent allocation, where individual payoffs depend on how many others adopt the same strategy, leading in the case of negative FDS to diminishing returns and equilibrium behavior. Examples include the classic evolutionary game known as producer-scrounger model, in which 
individuals can either search for resources themselves (producers) or exploit resources discovered by others (scroungers), 
scrounging payoffs decrease as more individuals choose it~\cite{barnard1981producers}. Ideal free distribution theory predicts that animals distribute across resource patches to equalize intake rates under crowding~\cite{cantrell2007evolution}. These mechanisms are similar to our task allocation framework, where we assume concave scalability functions that induce diminishing marginal gains as more agents are assigned to a task. Both evolutionary and game-theoretic models reach equilibria in which no unilateral reassignment improves performance, similar to equilibria in potential games and evolutionary stable strategies~\cite{monderer1996potential}. These equilibria are typically reached through decentralized adaptation over time. In contrast, our approach explicitly computes the globally optimal allocation in polynomial time under known scalability functions, which makes it particularly well suited for engineered multi-agent systems such as robot swarms.

\begin{figure*}[h]
  \centering
  \includegraphics[width=1\linewidth]{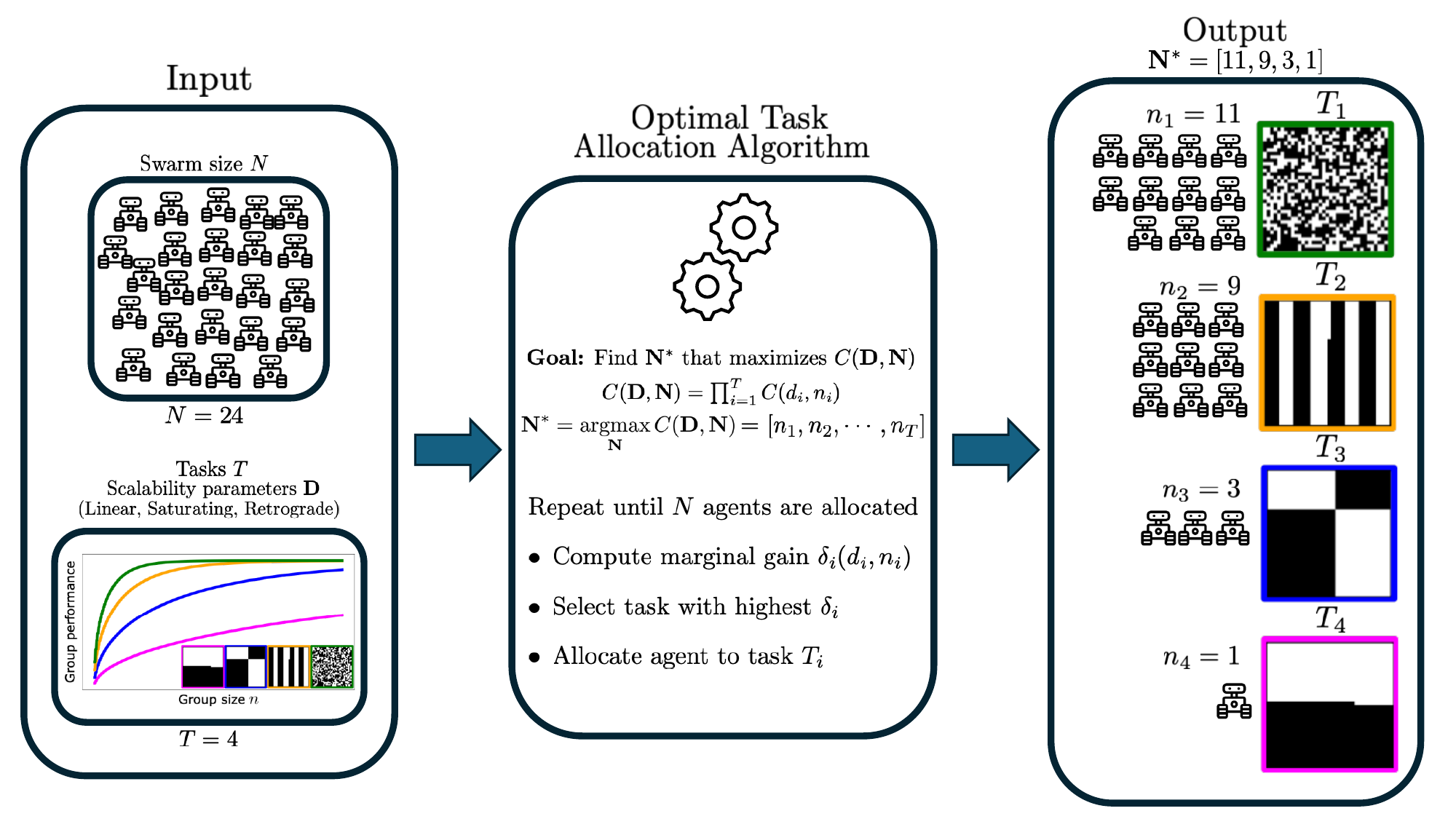}
  \caption{Overview of the proposed algorithm, showing the steps from input to final agent–task assignment. In this representative example, a total of $N = 24$ agents are allocated across $T = 4$ tasks, each characterized by a distinct scalability curve~$C(d_i,n_i)$. Based on marginal performance gains, the algorithm iteratively computes the optimal allocation as $N^* = [11, 9, 3, 1]$. In the illustrated collective decision-making scenario (collective sensing of environmental features shown as black and white floor tiles), the robot allocation is biased in favor of the easiest tasks, that is, tasks where robots can measure the correct state of the world with increased individual accuracy. In contrast, when swarms are large (resources to be allocated are abundant), the optimal solution is biased in favor of the more difficult tasks (Sec.\ref{sec:res-many-tasks}).}
  \label{fig:graph_abst}
\end{figure*}

Our algorithm generalizes to task allocation for any concave scalability functions (see Fig.~\ref{fig:perf_all}), which are frequently found both in computational and swarm robotics systems, as both face similar scalability constraints~\cite{hamann2021scalability}.
In parallel computing, performance is limited by bottlenecks, such as memory bandwidth or CPU constraints, resource contention when multiple processes compete for shared resources, and process synchronization overhead, which slows execution due to coordination delays. Similarly, in swarm robotics, scalability is constrained by physical interference among robots and with the environment, competition for shared resources such as communication bandwidth or physical space, and coordination overhead to create an efficient collective behavior. Building on this similarity, we consider three types of scalability functions that are most found in the literature, which exhibit linear, saturating, and retrograde trends.

We validate our algorithm with multi-agent and robotics simulations.
In our experiments, we model tasks as collective decision-making problems, which are a common application and benchmark in swarm robotics~\cite{hamann2018swarm}. 
To make consensus decisions on the true environmental state, agents need to collect information, process it, and collectively select one option.
As agents can only sample locally, they acquire inaccurate, incomplete, or non-representative environmental information. However, if we deploy several individuals who take and combine independent samples from the same area, we can reduce variance and improve decision accuracy. 

Combining independent samples to increase the precision of an estimate is known as variance reduction~\cite{kroese2013handbook}. Variance reduction is common in natural and artificial collective systems that exploit the plurality of the individuals to maximize the quality of sampled information on which the decision will be based~\cite{sasaki2013ants, kao2014decision, khaluf2019collective}. 

Thus, our task allocation scenario can be seen as a form of \textit{collective selective attention}, where the swarm improves the decentralized acquisition of environmental information by allocating individuals to a set of sampling tasks.

Collective performance can be improved by allocating more agents to more difficult tasks, that is, areas where agents make more frequent observation errors due to a higher variance in their individual observations. The intuition is that by investing more resources (i.e., more agents) into acquiring environmental information, observational variance is reduced. A~similar variance-reduction technique has been observed in cognitive neuroscience studies of human perceptual decisions with two stimuli (equivalent to our spatial features). The participant's resource was time and the most efficient solution was to allocate more time to categorize the noisier (i.e., higher variance) stimulus~\cite{cassey2013adaptive}. Similarly, another study found that ants adaptively vary their nest-site selection strategy to allocate more attention (resources) on higher variance attributes, which are more difficult to be categorized~\cite{sasaki2013ants}. In statistics, the idea of using sampling groups with sizes proportional to the stimulus variance is formalized as the importance sampling technique~\cite{glynn1989importance,neal2001annealed}.  

In our collective decision-making scenario, agents combine social information through majority votes, hence we can measure the scaling of performance through the Condorcet's Jury Theorem (CJT). 
This theorem states that, in a binary decision problem, assuming each agent votes independently and has a greater tendency to vote for the correct rather than the incorrect decision, the probability of the majority decision being correct (i.e., the group accuracy) increases with the group size~\cite{condorcet1785essay}. 
In addition, we test our algorithm in a series of simple robot simulations, indicating the relevance of our solution for real-world problems as seen in large-scale robot systems.

Our results show that task scalability functions are not the only relevant parameters in defining the optimal allocation of $N$~agents to $T$~tasks. Instead, the proportion of agents allocated to each task that maximizes collective accuracy is size-dependent. This means that the optimal allocation of resources is not a constant proportion of the swarm size, rather the relative team sizes are different when the swarm has fewer or more agents. This size-dependent allocation is only present in nonlinear scalability functions, where collective performance saturates to a maximum value when a large number of agents are assigned to a task.

Our main contributions are as follows: (i) we formally characterize the multi-task allocation problem and show the naive exhaustive approach has exponential algorithmic complexity; (ii) we propose a polynomial-time algorithm adapting marginal-gain optimization to compute the optimal allocation for any concave scalability function describing how performance depends on group size; (iii) we exploit this algorithm to study how optimal solutions change with swarm size $N$ and number of tasks $T$, revealing counter-intuitive cases in which assigning more agents to the most difficult task is suboptimal for small swarms; and (iv) we validate the relevance of our approach by applying the algorithm to swarm robotics scenarios and demonstrating its effectiveness through extensive multi-agent simulations.

\section{Multi-task Allocation Problem} \label{sec:prob_des}

We focus on the problem of allocating $N$ identical agents to a set of $T<N$ tasks to achieve maximum collective performance. Each agent can only be allocated to a single task. Maximum performance is achieved by maximizing the performance in each task. Failure in one task means the entire mission fails. 

Our algorithm relies on the following assumptions:
\begin{itemize}
    \item The task allocation is computed by a central unit with full knowledge of individual task performance curves and agent availability.
    \item Agents are homogeneous; they have identical capabilities.
    \item Tasks are independent of each other; there are no interactions or interference effects between tasks.
    \item The scalability function of every task, defining how the task performance changes with respect to the number of allocated agents, is concave.
    \item Agents are allocated to tasks prior to execution; there is no task switching or reallocation during runtime.
   \item Each task is of equal importance in the allocation framework.
\end{itemize}

\paragraph{Collective performance} We formalize the multi-task allocation problem as follows: the vector $\mathbf{N}=(n_1,\dots,n_T) \in \mathbb{N}^T_{>0}$, with $\sum^T_{i=1} n_i = N$ represents how the $N$ agents are distributed among $T$ tasks and $\mathbf{D} = (d_1,\dots,d_T)$ represents the parameters for the scalability curves of all tasks.
We measure the collective performance as the product of performances of all individual tasks:
\begin{equation}
\label{eqn:mult}
    C(\mathbf{D},\mathbf{N}) = \prod_{i=1}^T C(d_i , n_i) \;.
\end{equation}
Here, $C(\cdot)$ represents the scalability function, and $C(d_i,n_i)$ gives the individual task performance with scalability parameters $d_i$ and $n_i$ agents allocated to it.

Computing the collective performance as a multiplication of $C(d_i,n_i)$ assumes that all tasks are of equal importance. While this work does not explicitly address the scenario when some tasks are more critical than others, our approach can also be applied seamlessly to a collective performance function that includes different task importances through a weighted sum.

\paragraph{Goal} 
Our goal is to find the vector $\mathbf{N}$ that maximizes Eq.~\eqref{eqn:mult} for a given~$\mathbf{D}$, which, in mathematical terms, is formalized as follows: 
\begin{equation}
\label{eqn:argmax}
    \mathbf{N}^* = \argmax_\mathbf{N} C(\mathbf{D},\mathbf{N}).
\end{equation}

To model swarm performance for a variety of tasks, we follow how scalability is measured and modeled in distributed computing systems. Computational performance in parallel systems depends on how shared resources are managed and is influenced by bottlenecks and contention. Performance in swarm systems shows similar scalability constraints due to physical interference and communication overhead~\cite{hamann2021scalability}. Building on this similarity, we model the different scalability functions (i.e., different tasks) to show linear, saturating, and retrograde trends (see Fig.\,\ref{fig:perf_all}). We summarize the structure of the proposed method in Fig.\,\ref{fig:graph_abst}.

\paragraph{Linear scalability}
Linear scalability refers to the ability of the system to increase its throughput (i.e., performance) proportionally to the number of resources (i.e., agents) added (blue curve in Fig.\,\ref{fig:perf_all}). We use Gustafson's Law~\cite{gustafson1988} (GL) to model tasks which are highly parallelizable, displaying a (near-)linear relationship between the number of agents and the task performance. This relationship is caused by the assumption that most work of the task can be performed independently, in parallel, without any interference by each agent. 

We formalize GL as 
\begin{equation}
\label{eqn:gl}
    C_\text{GL}(\lambda, n) = n-\lambda(n-1) \;,
\end{equation}
where $n$ is the number of agents allocated to the task and $\lambda\in(0,1)$ is the fraction of work that cannot be parallelized. 

In swarm robotics, we can model tasks, such as foraging and surveillance, as GL because, typically, their performance (e.g., number of collected items or area coverage) scales linearly with the number of robots, assuming the environment and object availability/distribution grows proportionally~\cite{balch2000reward}.

\paragraph{Saturating scalability}
Systems that exhibit saturating scalability increase their performance as resources are added, but eventually plateau at a maximum performance level, regardless of how many additional resources are introduced  (orange curve in Fig.\,\ref{fig:perf_all}). In swarm robotics, collective decision-making usually shows saturating scalability \cite{zakir2024miscommunication}, which we model using the Condorcet's Jury Theorem (CJT). The CJT gives the probability of the majority of a group of~$n$ decision-makers having the correct opinion in binary decisions. 
The theorem assumes that each agent makes its decision independently (i.e., without being influenced by other group members and acquiring uncorrelated external information). For example, robots deciding on the current state of the environment make independent observations from different locations. Every agent makes the correct personal decision with a probability~$p$. 

The CJT indicates that if the personal decisions are aggregated with a majority rule, that is, the collective decision is the one chosen by the majority of the agents, the average collective accuracy increases as
\begin{equation}
\label{eq:cjt}
\begin{array}{ll}
C_\text{CJT}(p,n) = &\sum_{k=\lfloor\frac{n}{2} + 1\rfloor}^{n} \binom{n}{k} {p}^k (1-p)^{n-k} \\ &+ \frac{1}{2} \binom{n}{\frac{n}{2}} p^{\frac{n}{2}} (1-p)^{\frac{n}{2}}(1-n\text{~mod~}2) \;,
\end{array}
\end{equation}
where the term $\binom{n}{k}$ represents the binomial coefficient and $\lfloor.\rfloor$ the floor operator (rounds down to the nearest integer). Eq.~\eqref{eq:cjt} is a sum with the last term being non-zero only when~$n$ is an even number. This term covers the probability that ties occur (i.e., an equal number of votes for both options), which are resolved randomly (50-50\% decision).

\paragraph{Retrograde scalability}
In systems exhibiting retrograde scalability, performance initially increases with the addition of resources, it eventually achieves a peak, and beyond that point, it degrades as more resources are added to the system  (green curve in Fig.\,\ref{fig:perf_all}). We use the Universal Scalability Law (USL) to model tasks with retrograde scalability~\cite{gunther93}, using two parameters: the degree of contention $\alpha$ (i.e., competition for shared resources) and the lack of coherence $\beta$ (i.e., overhead from maintaining consistency and coordination). 

The USL for a swarm of size $n$ is 
\begin{equation}
\label{eqn:usl}
    C_\text{USL}(\alpha, \beta, n) = \frac{n}{1+ \alpha (n-1)+ \beta n (n-1)} \;.
\end{equation}
The USL can also represent linear, saturating, and superlinear scalability (e.g., for $\alpha<0$)~\cite{gunther15}. In this work, we only focus on the retrograde case by setting $\beta\ge0$.

In swarm robotics, physical interference between robots affects performance as the swarm size increases, resulting in diminishing returns or performance degradation in larger swarms~\cite{hamann2021scalability,kuckling24}.

\paragraph{Task scaling} 
The interpretation of the scalability parameter~$d_i$ varies depending on the scalability function. For task $i$ with a linear scalability curve, $d_i$ is equal to $\lambda_i$, which defines the slope of the line as $1-\lambda_i$. In the CJT-based models, $d_i$ corresponds to the agent's probability $p_i$ of making a correct decision in task $i$. If task $i$ exhibits retrograde scalability (modeled using USL), $d_i$ represents the pair of parameters $\alpha_i$ and $\beta_i$ expressed as $d_i = (\alpha_i, \beta_i)$.

\begin{figure}[t]
  \centering\includegraphics[width=0.7\linewidth]{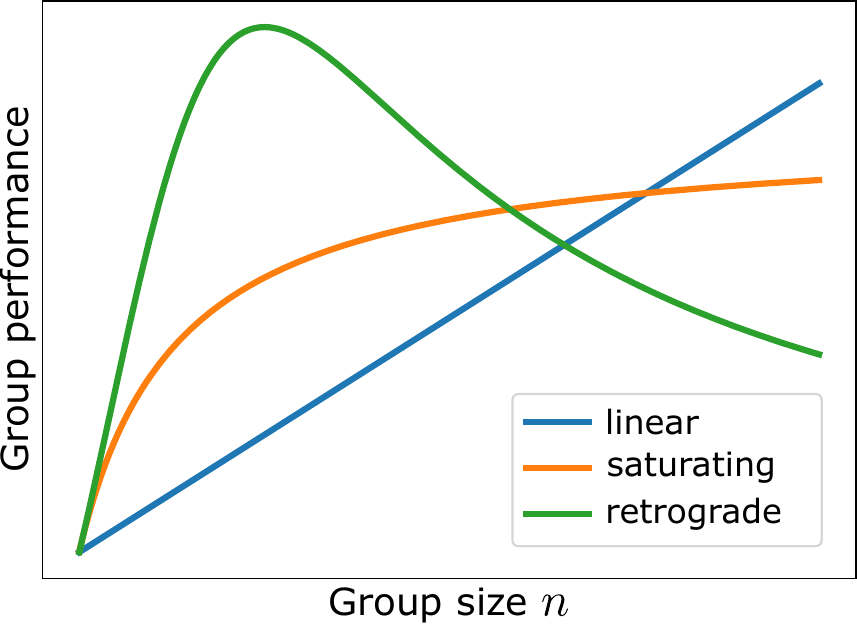}
  \caption{Schematic plot of three types of scalability functions---with linear, saturating, and retrograde trends---describing the group performance as a function of group size. Each task, depending on the particular application and scenario, can scale differently.}
  \label{fig:perf_all}
\end{figure}

\section{Modeling Majority Decision-Making}
\label{sec:case-study}

In this section, we show that collective decision-making tasks in swarm robotics can exhibit either saturating or retrograde scalability, depending on the scenario constraints and assumptions. When there is no physical interference between robots (or virtual agents), collective decision-making via majority rule is well captured by the saturating CJT function (Eq.\,\eqref{eq:cjt}). In contrast, when robots experience physical interference, swarm performance follows retrograde scalability and is well described by the USL (Eq.\,\eqref{eqn:usl}). 

We model these collective decision-making tasks using one of the standard benchmarking scenarios in swarm robotics~\cite{valentini2016collective,ebert2020bayes, zakir2022robot}. Robots must reach a consensus on the prevalent color of the floor. Our robots operate in environments with a floor composed of black and white tiles. Despite having access only to local and incomplete information, they combine each other's estimates to reach a consensus on the predominant color. 

\paragraph{Simulated environment} The simulated environment of each task has a size of $36\times36$ space-units (su). We also show that using environments with different sizes does not affect the results (see larger arena simulation results in Supplementary Fig.~S1). The binary classification task is characterized by both the proportion of black and white tiles (fill ratio $f$) and the spatial distribution of colors (e.g., large clusters of tiles of one color). In particular, we consider five fill ratios $f \in \{0.51, 0.52, 0.53, 0.54, 0.55\}$ and four spatial distributions---checkerboard, striped, four rectangles, and halved environments---illustrated in Fig.~\ref{fig:env}. Our choice of environments is motivated by the benchmarking framework proposed by Bartashevich et al.\cite{bartashevich2019benchmarking} which introduced different spatial patterns to capture task difficulty in collective perception (see Supplementary Sec.~2 and Supplementary Figs.~S2 - S3 for additional results and discussion on the specific choice of fill ratios). In the checkerboard environment, tiles of square size $1\time1$ su are placed uniformly at random. 
In the striped environment, bars of a uniform color with size $5\times36$~su are alternated (note that some bars are 1~su smaller to achieve the exact fill ratio~$f$). 
In the four rectangles environment, the floor is divided into four large quadrants with a uniform alternated color in each of them. In the halved environment, each half has a single color (note that the partitioning line is not precisely at half of the environment but is set to match the desired fill ratio~$f$).

\begin{figure}[t]
  \centering\includegraphics[width=0.85\linewidth]{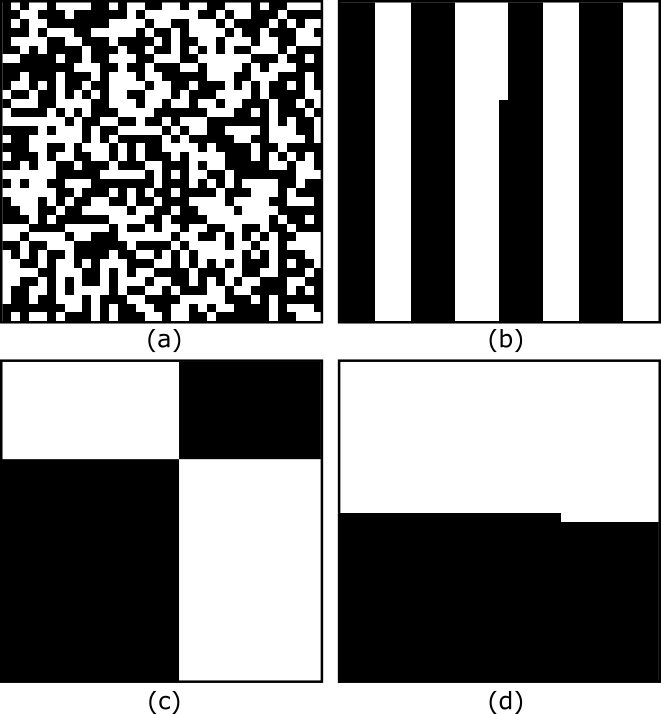}
  \caption{Simulated binary classification tasks with fill ratio $f=0.52$. The environment floor (sized \unit[$36\times36$]{su}) is composed of black and white tiles (sized \unit[$1\times1$]{su}). The robots are tasked with reaching a consensus on the most frequent color. We consider four spatial distributions: (a)~checkerboard, (b)~striped, (c)~four rectangles and (d)~halved environments.}
  \label{fig:env}
\end{figure}

\paragraph{Simulated robots}
Robots are modeled as particles that move in a 2D environment. In a first set of experiments, we assume no collisions among robots---that is, they are treated as dimensionless points. This eliminates physical interference and allows for ideal scalability, where adding more robots never impedes their movement. In Sec.~\ref{sec:interference}, we also consider a more realistic scenario in which robots' physical embodiment affects one another's movement. In this second case, we observe qualitatively different scaling of task performance, exhibiting a retrograding trend when too many robots are deployed in the same confined environment.

Robots can move, exchange opinion votes with one another, and are equipped with proximity and ground sensors. 
The robots move at a speed of \unit[0.1]{su} per timestep and rotate on the spot at an angular speed of $3^\circ$ per timestep. Proximity sensors are three binary sensors to detect the presence of obstacles in front of the robot at a distance smaller than \unit[4.5]{su}. One sensor faces the robot's heading direction ($0^\circ$) and the other two at $\pm45^\circ$. Proximity sensors are used to detect walls surrounding the environment and (in embodied simulations) other robots. Once an obstacle is detected, the robot initiates an avoidance maneuver, which consists of rotating in place until there is no obstacle in the detection range. Ground sensors allow robots to detect the color (black or white) of the tile beneath them. To avoid redundant sampling from the same floor tile, they track their covered distance and take two subsequent floor samples at a distance of \unit[1]{su} apart. Given the robots' limited motion speed and sensing range, they often make inaccurate estimates of the state world. However, they can combine their opinions with others to improve group accuracy.

\subsection{Three Robot Controllers}
\label{sec:control}

\begin{figure}[t]
  \centering
  \includegraphics[width=0.95\linewidth]{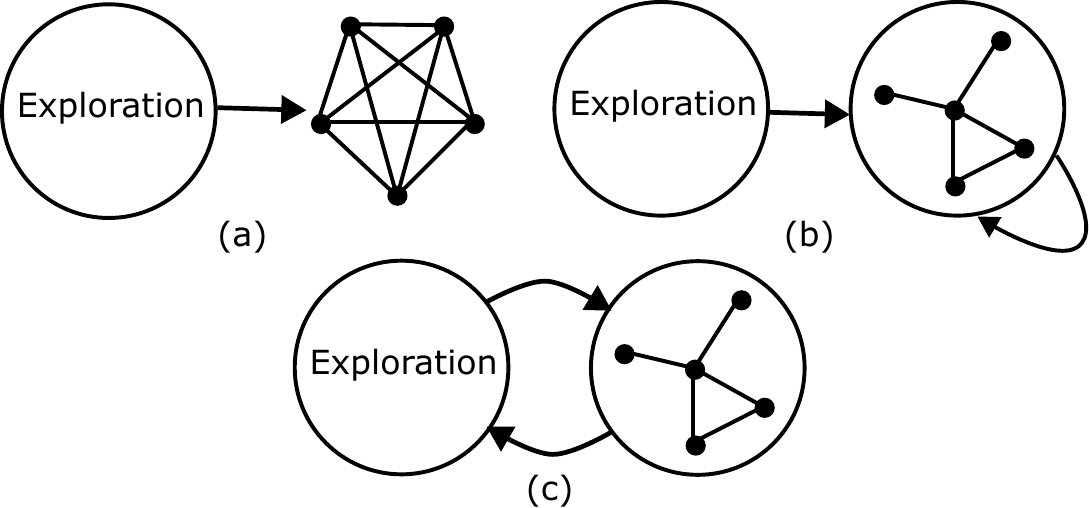}
  \caption{(a) Centralized controller: All robots start in the exploration state, during which they gather environmental data and form their initial opinion. After exploring, they communicate through all-to-all interactions to reach a majority-based consensus. (b)~Decentralized controller: Robots form their initial opinion similarly to the centralized controller. However, communication is restricted to local interactions and robots disseminate their opinions to neighbors until the entire swarm reaches a unanimous decision. (c)~Iterative controller: This controller follows an iterative process of exploration and dissemination where they repeatedly combine personal and social information. Robots simultaneously disseminate their opinions, listen to neighbors, and continue exploring. The iterative process continues until the swarm reaches a unanimous decision.}
  \label{fig:controllers}
\end{figure}

We consider three alternative controller algorithms to enable the robots to perform the sampling tasks collectively: centralized, decentralized, and iterative (see Fig.\,\ref{fig:controllers}). The centralized controller relies on all-to-all communication for a single-step majority decision, the decentralized controller spreads opinions locally until consensus emerges, and the iterative controller continuously refines opinions by integrating both individual and social information through repeated exploration and dissemination.

The \textbf{centralized} controller (Fig.\,\ref{fig:controllers}a) is the closest application of the CJT. All robots first explore the environment through a random walk to form their initial opinion on the state of the world (prevalence of white or black tiles). The exploration time is different for each robot and is drawn from a uniform distribution with a mean of 1200 timesteps to model asynchronous robot operations. Then, each robot votes for their opinion to make a majority decision. Robots are in an all-to-all communication network. Hence, all robots have global information, aggregate the opinions of the entire swarm, and reach the same decision. 

The \textbf{decentralized} controller (Fig.\,\ref{fig:controllers}b) uses the same initial opinion formation process as the centralized controller; however, the communication range of each robot is restricted to $7.5$~su. Robots' local interactions combined with their random walk in the environment lead to opinion spreading on a sparse time-varying communication network. Therefore, each robot only receives a part of others' opinions and consensus cannot always be reached in a single step. Hence, robots repeatedly make the majority decision only using the opinions of neighbors in their communication range. This process continues until the swarm achieves a unanimous consensus.

The \textbf{iterative} controller (Fig.\,\ref{fig:controllers}c) extends the decentralized controller by iteratively combining personal and social information. At the start, there is no social information available yet, hence each robot---in the same fashion as the other controllers---forms its opinion through environmental exploration and shares it with its neighbors. Robots repeatedly explore and sample the environment by making (local) majority decisions where one of the processed votes is the result of their last individual observation of the environment. This majority decision forms the robot's new opinion which is then broadcast to its neighbors. This iterative process continues until the swarm reaches a unanimous consensus.

\subsection{Majority Decision Results}
\label{sec:majority_des}
We conduct a series of multi-agent simulations to compare the theoretical prediction based on the CJT with the results obtained from our simulations, where agents run one of the three robot controllers of Sec.~\ref{sec:control}. 
These are simple robot simulations implemented using a Pygame-based simulator.\footnote{Swarmy: Robot swarm simulator \url{https://github.com/tilly111/swarmy}}
For the simulations studied in this section, agents have no volume and do not collide with each other (in contrast to Sec.~\ref{sec:interference} below).

As mentioned above, our simulated case study is the binary floor-color classification task which is a standard benchmark in swarm robotics~\cite{valentini2016collective,ebert2018multi, zakir2022robot}. We conduct 250~independent experiments for each environment and swarm size $N \in \{1,\dots,29\}$. We consider five fill ratios $f \in \{0.51, 0.52, 0.53, 0.54, 0.55\}$ for the checkerboard environment and four spatial distributions with the same fill ratio $f = 0.52$ (see Fig.~\ref{fig:env}).
To compare the multi-agent simulations with the CJT predictions, we measure the individual accuracy probability~$p$ of any agent to make the correct environmental estimate during the allocated exploration time interval for each environment type. We measure~$p$ as the average of all individual agent estimates of our experiments (i.e., approximately $10^5$ data points). Table~\ref{tab:individualAcc} illustrates the relationship between the estimated individual accuracy~$p$ and the different types of environments. 

As already documented in previous studies~\cite{valentini2016collective,Shan2020, bartashevich2021multi}, the task becomes harder and the individual accuracy~$p$ decreases either by decreasing the fill ratio (pushing it closer to $f=0.5$) or by increasing the spatial correlation of colors. 
In addition, the value $p$ also depends on the behavioral and sensing capabilities of the robots, such as the type of random walk they perform or how frequently they sample the environment, as well as the coverage they achieve during exploration. Increased coverage of the arena results in more representative samples and thus increases the probability of making a correct decision.
By decreasing the fill ratio, we reduce the difference in frequency between white and black tiles, therefore the probability~$p$ of making correct individual observations is reduced. It may seem less intuitive why we observe also a reduced probability~$p$ when we keep the fill ratio unchanged and only increase the spatial correlations of colors (clusters of uni-color tiles and hence more patchy environment). Agents taking local samples in highly correlated environments are less likely to sample representative environmental data. This spatial correlation analysis shows that embodied agents (e.g., robot swarms) are exposed to several factors that can cause inaccuracies in their individual observations, characterizing the task scalability curve.

\begin{table} [tb]
\renewcommand{\arraystretch}{1.2}
   \medskip
   \caption{Individual agent accuracy for the considered environments.}
   \label{tab:individualAcc}
   \begin{center}
   \begin{tabular}
     {P{1.3cm}| P{1.2cm} P{1.2cm} P{1.2cm} P{1.2cm}}
       & \multicolumn{4}{c}{estimated individual probability $p$} \\
     fill ratio $f$ & checkerb. & striped & four rect. & halved \\\hline\thickhline
     \\[-3ex]
	0.51 & \begin{tabular}{@{} P{1.2cm} P{1.2cm} P{1.2cm} P{1.2cm}} 0.5361 & / & / & /\end{tabular}\\
	\hline
	0.52 & \begin{tabular}{@{} P{1.2cm} P{1.2cm} P{1.2cm} P{1.2cm}} 0.6017 & 0.5698 & 0.5402 & 0.5177 \end{tabular}\\
	\hline
	0.53 & \begin{tabular}{@{} P{1.2cm} P{1.2cm} P{1.2cm} P{1.2cm}} 0.6603 & / & / & /\end{tabular}\\
	\hline
	0.54 & \begin{tabular}{@{} P{1.2cm} P{1.2cm} P{1.2cm} P{1.2cm}} 0.7454 & / & / & /\end{tabular}\\
 \hline
    0.55 & \begin{tabular}{@{} P{1.2cm} P{1.2cm} P{1.2cm} P{1.2cm}} 0.8069 & / & / & /\end{tabular}\\
   \end{tabular}
   \end{center}
\end{table}

Given that individual accuracies~$p_i$ for any considered task~$i$ are larger than the chance level ($\forall i,  p_i>0.5$), the CJT states that as the number $n_i$ of robots assigned to task~$i$ increases, the collective accuracy of majority decisions (Eq.\,\eqref{eq:cjt}) monotonically increases and asymptotically converges to one~\cite{king2007use}.
The results of Fig.~\ref{fig:single-task-fill}(a) show that for all three considered controllers, there is a good agreement between the CJT predictions (solid lines) and the multi-agent simulations (dashed/dotted lines) for the five considered fill ratios~$0.51\le f\le 0.55$. Hence, we can use the CJT of Eq.~\eqref{eq:cjt} to predict how the group performance scales for larger group sizes $n\in[1,500]$, as shown in Fig.~\ref{fig:single-task-fill}(b). Fig.~\ref{fig:single-task-dist} shows a similar analysis for the four different spatial distributions, showing again a good match between the theorem and the simulations. As there is an inverse relationship between the spatial correlation of the colors in the environment and the individual agent accuracy~$p$ (see Table~\ref{tab:individualAcc}), this trend is also shown at the group level with lower collective performance in environments with high correlated spatial features (Fig.\,\ref{fig:single-task-dist}). 
Nevertheless, in any tested environment, group performance increased with group size~$n$.

\begin{figure}[t]
  \centering
  \includegraphics[width=1\linewidth]{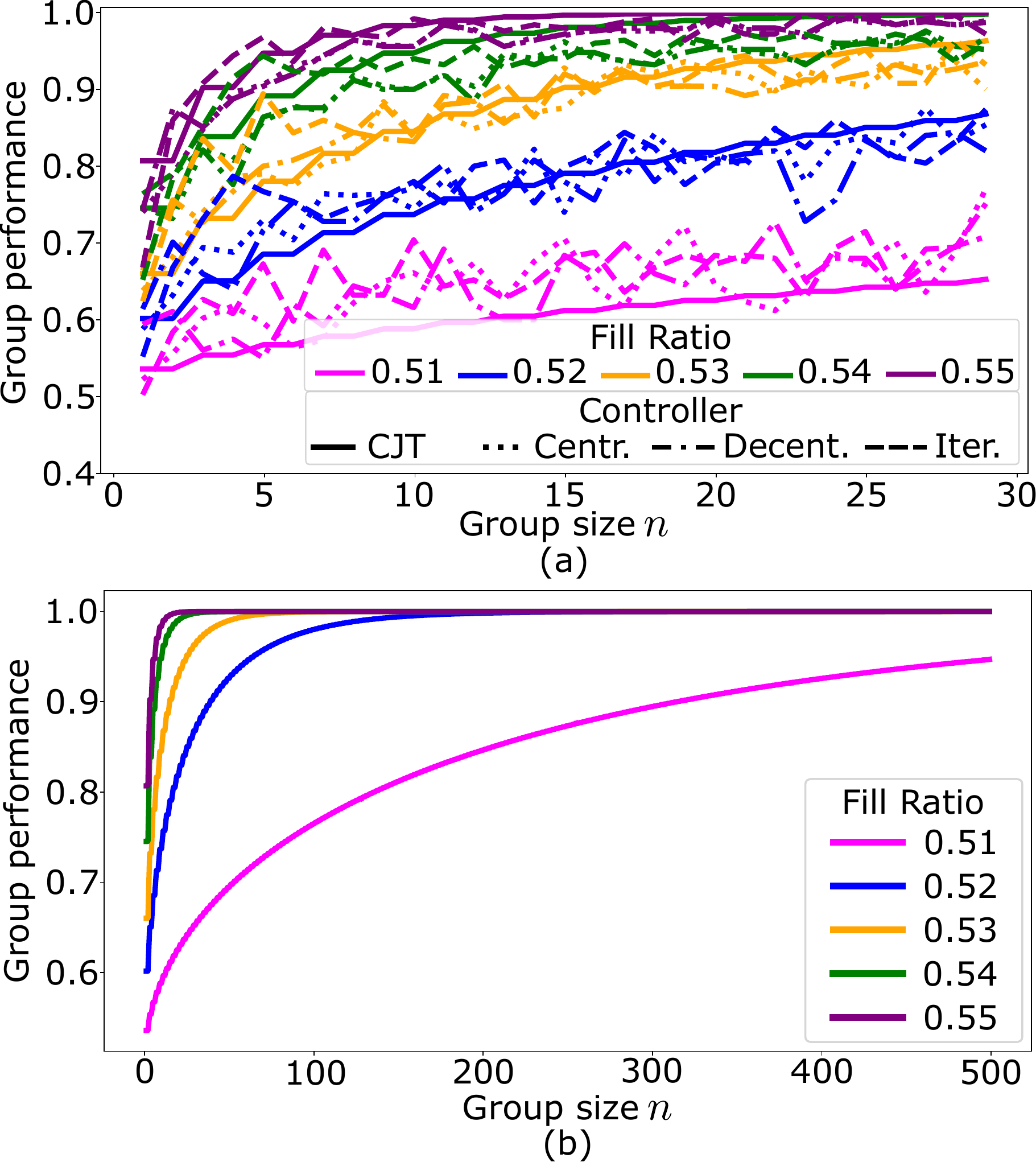}
  \caption{Scaling of the group accuracy (y-axis) in making majority decisions for different group sizes (x-axis) in the standard checkerboard environment (see Fig.~\ref{fig:env}(a)). The different colors show results for different fill ratios.  Solid line shows the CJT predictions and, in~(a), the dotted, dash-dotted, and dashed lines show the multi-agent results for the centralized, decentralized, and iterative controllers, respectively. There is a good agreement between the CJT predictions and the multi-agent results. In~(b), the CJT lines are extended to larger swarm sizes, using the same values of~$p$ as in~(a).}
  \label{fig:single-task-fill}
\end{figure}

\begin{figure}[t]
  \centering
  \includegraphics[width=1\linewidth]{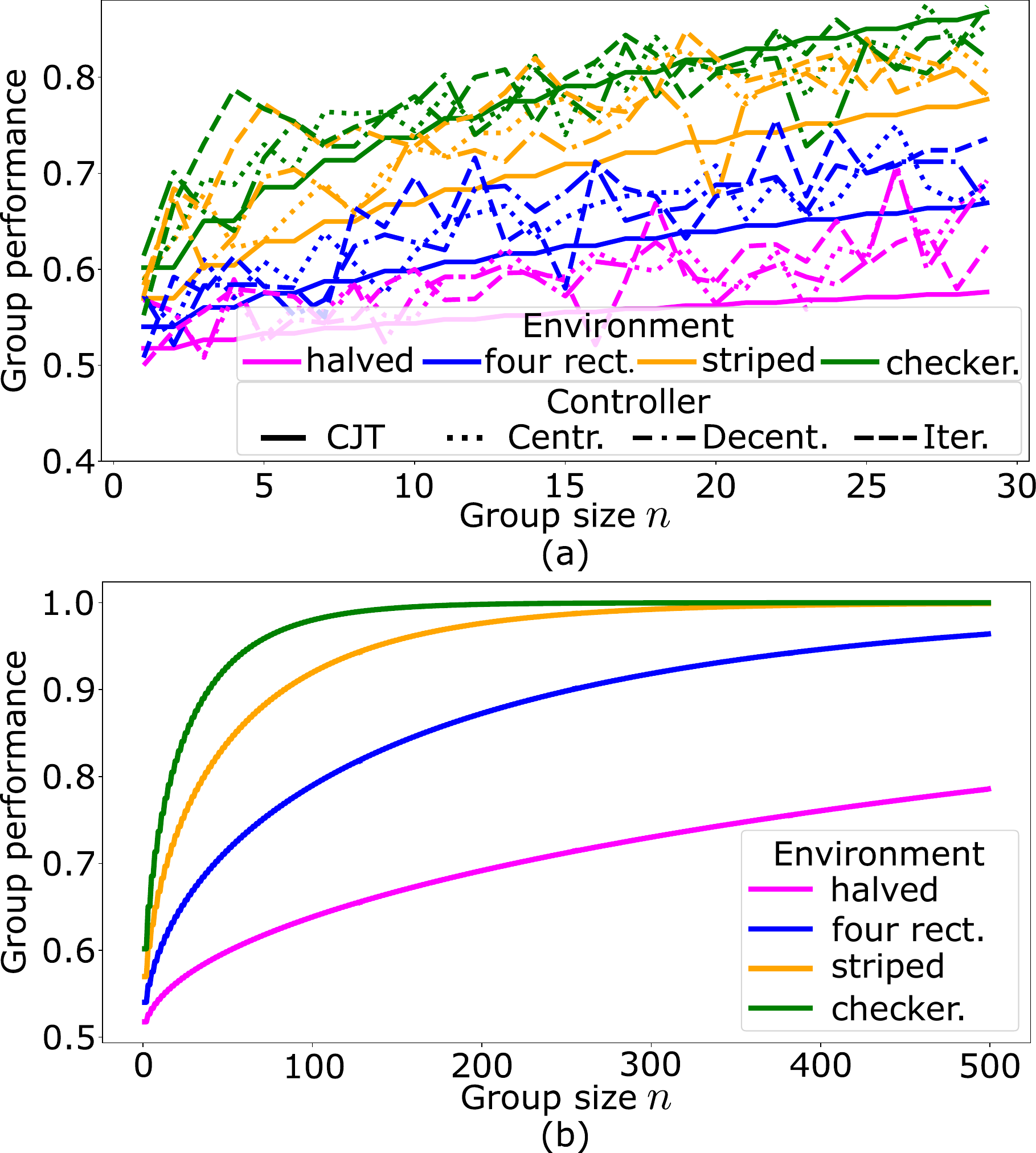}
  \caption{Results for the three spatially correlated environments: striped, rectangles, and halved (see Fig.~\ref{fig:env}(b, c, d)). Scaling of the group accuracy (y-axis) in making majority decisions for different group sizes (x-axis). The different colors show results for different spatial distributions and the line types match the ones used in Fig.~\ref{fig:single-task-fill}. As also shown in Table~\ref{tab:individualAcc}, when the color correlation increases, the task gets harder and the accuracy decreases.}
  \label{fig:single-task-dist}
\end{figure}

\subsection{Simulations with Physical Interference}
\label{sec:interference}
Above we assumed agents to be without volume that do not physically interfere with each other, meaning there were no collisions between agents restricting their movement (aside from interactions with walls). 
However, in real-world applications (e.g., applications of swarm robotics), such an idealized assumption cannot hold. 
Robots often operate in confined shared spaces where physical interference inevitably leads to collisions or even congestion that can significantly affect the robot system's efficiency. 
Therefore, it is not guaranteed that collective accuracy monotonically increases with swarm size. 
Beyond a critical swarm size, increasing the number of robots can actually reduce collective performance, as physical interference leads to traffic jams and bottlenecks~\cite{hamann2021scalability}. Such a critical swarm size is normally expressed in terms of robot density, that is, the number of robots per space unit.

Given the importance of physical interference in practical applications, we include robot-to-robot physical collisions and preemptive avoidance maneuvers in a modified version of the multi-agent simulator described in Sec.~\ref{sec:case-study}. Robots detect each other at a distance of $4.5$~su and initiate avoidance maneuvers. As a side effect of these maneuvers---consisting of in-place rotations---robots temporarily stop collecting new environmental samples, as sampling only occurs when the robot has moved at least 1~su from its previous sample location.

We ran a series of experiments in the checkerboard environment to show the effect of physical embodiment. 
As an emergent effect of robot-to-robot interference, we observe retrograde scalability. 
Fig.~\ref{fig:interference}(a) shows the multi-robot simulation results (dotted/dashed lines) with the fitted USL functions computed via nonlinear least squares fitting. Table~\ref{tab:USLparams} shows the fitted parameters for the various fill ratios $f \in \{0.51, 0.52, 0.53, 0.54, 0.55\}$. To normalize the curves to the interval~$[0,1]$, we introduce a proportionality constant~$k$:
$kC_\text{USL}(\alpha, \beta, n)$. 
We also provide the root mean square error (RMSE) as a metric to evaluate the goodness of fit. For small groups $n<10$, where robot densities and physical interference are low, group accuracy increases with $n$. 
However, for groups of approximately $n>10\sim15$ robots, group accuracy deteriorates as~$n$ increases due to detrimental physical interference, leading to retrograde scalability. This decline in group performance is also captured by the fitted USL curves: as a positive parameter $\beta>0$ (see Table~\ref{tab:USLparams}) indicates a retrograde scalability function. 
Note that the prediction of performance by the fitted USL shown in Fig.~\ref{fig:interference}(b) tends to be pessimistic, because it is unlikely that the performance~$C$ would go below the fill ratio~$f$ ($C<f$). 
As observed in Fig.~\ref{fig:interference}, although group accuracy decreases, the decline is slow and linear, indicating that robots tend to resolve interference and resume productive operation quickly~\cite{hamann2020guerrilla}. 

\begin{table} [tb]
\renewcommand{\arraystretch}{1.2}
   \medskip
   \caption{Estimated Parameters for USL-Based Scalability Curves for Interference Experiments}
   \label{tab:USLparams}
   \begin{center}
   \begin{tabular}
     {P{1.3cm}| P{1.2cm} P{1.2cm} P{1.2cm} P{1.2cm}}
      fill ratio $f$ & $\alpha$ & $\beta$ & $k$ & RMSE \\\hline\thickhline
     \\[-3ex]
	0.51 & \begin{tabular}{@{} P{1.2cm} P{1.2cm} P{1.2cm} P{1.2cm}} 0.7971 & 0.0012 & 0.5194 & 0.0305\end{tabular}\\
	\hline
	0.52 & \begin{tabular}{@{} P{1.2cm} P{1.2cm} P{1.2cm} P{1.2cm}} 0.6376 & 0.0021 & 0.5270 & 0.0325 \end{tabular}\\
	\hline
	0.53 & \begin{tabular}{@{} P{1.2cm} P{1.2cm} P{1.2cm} P{1.2cm}} 0.6750 & 0.0016 & 0.6093 & 0.0241 \end{tabular}\\
	\hline
	0.54 & \begin{tabular}{@{} P{1.2cm} P{1.2cm} P{1.2cm} P{1.2cm}} 0.7089 & 0.0010 & 0.6814 & 0.0231\end{tabular}\\
 \hline
    0.55 & \begin{tabular}{@{} P{1.2cm} P{1.2cm} P{1.2cm} P{1.2cm}} 0.7526 & 0.0003 & 0.7201 & 0.0204\end{tabular}\\
   \end{tabular}
   \end{center}
\end{table}

\begin{figure}[t]
  \centering
  \includegraphics[width=1\linewidth]{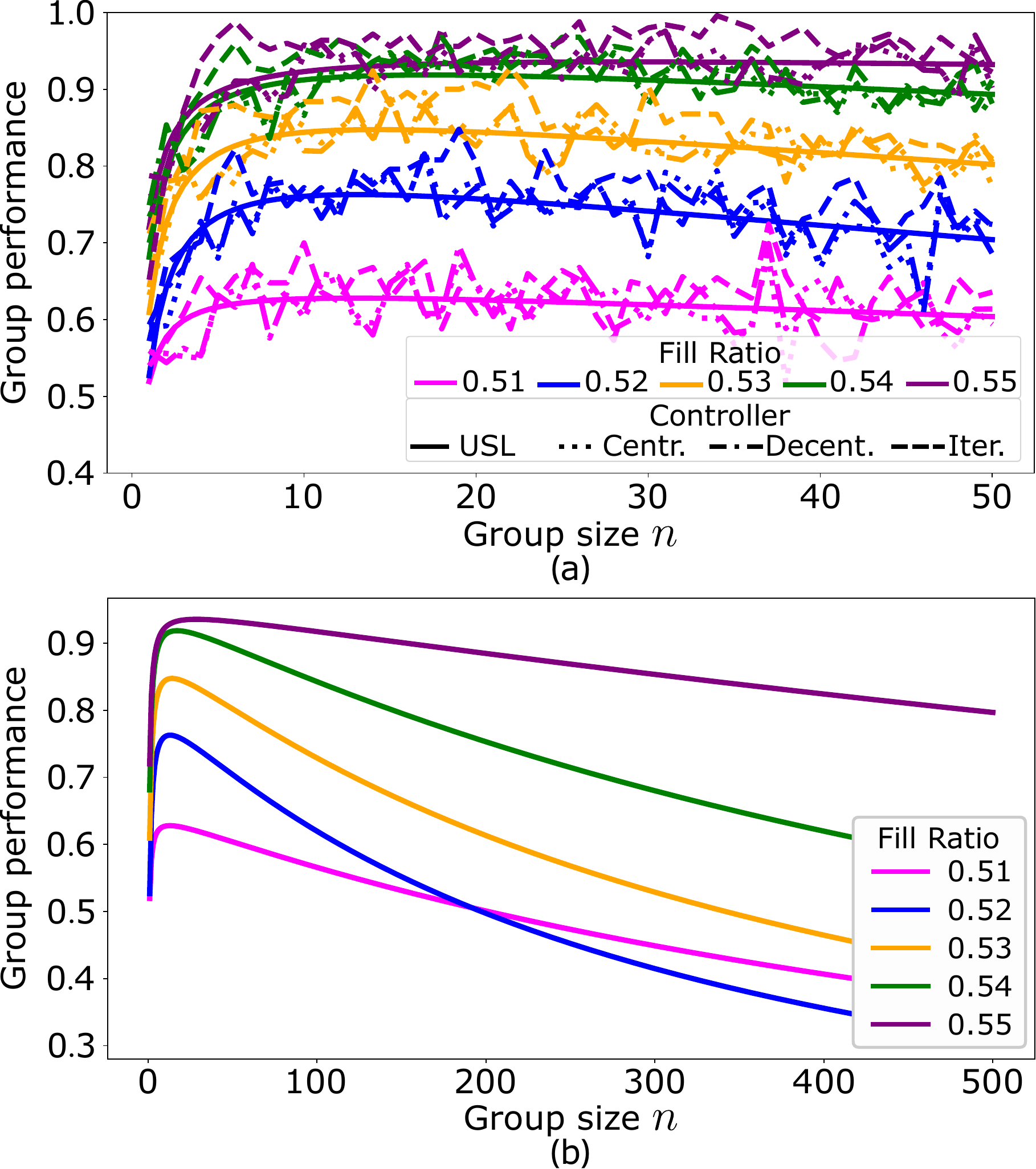}
  \caption{Scalability curves for multi-agent simulations with robot-to-robot physical interference. In~(a), the dashed/dotted lines show the group accuracy computed from 250 runs per condition in the checkerboard environment with fill ratio $f \in \{0.51, 0.52, 0.53, 0.54, 0.55\}$. The solid lines are the fitted USL curves with parameters obtained via nonlinear least squares fitting. In~(b), the fitted USL curves are extended to larger swarm sizes, using the estimated parameters in Table~\ref{tab:USLparams}.}
  \label{fig:interference}
\end{figure}

\section{Optimal Task Allocation Algorithm}
\label{sec:algorithm}
Next, we present our approach (Alg.~\ref{alg:task_alloc}) to allocate $N$~agents to $T$~tasks so that overall swarm performance (Eq.~\eqref{eqn:mult}) is maximized. 
Alg.~\ref{alg:task_alloc} is designed to find $\mathbf{N}^*$ from Eq.~\eqref{eqn:argmax}. Our underlying intuition is that of Cassey et al.~\cite{cassey2013adaptive}, the optimal allocation of resources is a function of the variability of the accumulation of evidence. Here, each task is characterized by its respective performance curves. 

Note that the total number of possible agent allocations grows exponentially with $N$ and $T$.
The number of possibilities of allocating~$N$ agents to $T$~tasks is given by the ordered partition function, that is, the number of possibilities of writing an integer~$N>0$ 
as the sum of $T>0$ positive integers in an ordered way. The ordered partition function for parameters~$N,T$ takes value ${N-1 \choose T-1}$, which is asymptotically $\Theta\left(\frac{2^N}{\sqrt{N}}\right)$ for $T=\lfloor \frac{N+1}{2}\rfloor$. 
Therefore, a brute-force approach to determine the optimal allocation is generally infeasible.

We begin by introducing the general version of our task allocation algorithm and prove that it computes the optimal solution under the assumption that each task performance function is concave. The computational complexity of the algorithm is analyzed in Sec.~\ref{sec:generalCase}, where we show that it runs in polynomial time.
We then show that the algorithm can be further simplified---reducing its complexity---by tailoring specific steps (marginal gain function) to each scalability function type: linear, saturating, and retrograde.

\subsection{The General Case} 
\label{sec:generalCase}

Depending on the performance function~$C(d_i, n_i)$ for task~$i$ with scalability parameter~$d_i$, we define the absolute gain function~$\Delta(d_i,n_i)$ and the marginal gain function~$\delta(d_i,n_i)$ as follows for~$n_i>0$:
\begin{align}
\Delta(d_i,n_i) &\coloneqq C(d_i,n_i+1) - C(d_i,n_i)\;, \\
\delta(d_i,n_i) &\coloneqq \frac{\Delta(d_i,n_i)}{C(d_i,n_i)}\;.
\end{align}
Given these definitions, we find how the addition of one agent to the $i$-th task changes the collective performance~$C$:
\begin{equation}\label{eq:overall-performance}
C(\mathbf{D},\mathbf{N}+e_i) = (1+\delta(d_i,n_i))C(\mathbf{D},\mathbf{N})\;.
\end{equation}

Alg.~\ref{alg:task_alloc} follows a simple principle: at each step, the next available agent is assigned to the task that yields the largest relative improvement in collective performance; concavity ensures that these relative gains decrease as more agents are assigned, guaranteeing the global optimality of this greedy selection.
Given the number of agents~$N$, the number of tasks~$T$, and the scalability parameters~$\mathbf{D}$, the algorithm first assigns one agent to each task (line~\ref{line:init-N}). Then it initializes an array of size~$T$ where each entry at respective position~$i$ represents the gain~$\delta$ achieved by assigning an additional agent to task~$i$. In each iteration (lines \ref{line:mainloop} to \ref{line:add2}), the task offering the highest gain is selected for the assignment of one new agent. In line~\ref{line:check}, we introduce a parameter $\varepsilon\ge0$ to define a threshold beyond which assigning an agent to a task is no longer considered to be beneficial. 
This threshold can be interpreted as the maintenance cost of a robot, meaning that if the gain from assigning an additional agent to a task is lower than $\varepsilon$, the cost of deployment outweighs any benefits of the operating agent. 
If all marginal gains are lower than this threshold, all remaining agents are placed in a `non-task'~$T+1$ that represents an idle pool of agents and the loop terminates. The algorithm continues looping until all agents are assigned.

\newcommand{\pr}[2]{C\left(#1,#2\right)}
\newcommand{\dfn}{=_{\rm def}}
\newcommand{\absgain}[2]{\Delta\left(#1,#2\right)}
\newcommand{\relgain}[2]{\delta\left(#1,#2\right)}
\newcommand{\success}[2]{C\left(#1,#2\right)}

\begin{algorithm}[t] 
\caption{Optimal task allocation algorithm}\label{alg:task_alloc}
\begin{flushleft}
\hspace*{\algorithmicindent}
\textbf{Input:} \\
\hspace*{\algorithmicindent} \hspace*{\algorithmicindent} number of agents $N$
\\
\hspace*{\algorithmicindent} \hspace*{\algorithmicindent}
number of tasks $T\le N$ \\
\hspace*{\algorithmicindent} \hspace*{\algorithmicindent} scalability parameters $\mathbf{D}=(d_1,\dots,d_T)$ \\
 \hspace*{\algorithmicindent} \textbf{Output:} best allocation $\mathbf{N}^*=(n_1,\dots,n_T,n_{T+1})$
 \end{flushleft}
\begin{algorithmic}[1]
\State $(n_1,\dots,n_T) \gets (1,\dots,1)$\label{line:init-N}
\ForAll{$i \in (1,\dots,T)$}\label{line:init_gain}
\State $marginal\_gain[i] \gets \delta(d_i,1)$
\EndFor \label{line:init-cjt-end}\label{line:end_init_gain}
\While {$\sum_{i=1}^{T}n_i < N $} \label{line:mainloop}
\State $m \gets {\rm arg} \max_i \ marginal\_gain[i]$ \label{line:argmax} 
\If { $marginal\_gain[m] \leq \varepsilon$ } \label{line:check}
\State$n_{T+1} \gets n_{T+1}+ (N - \sum_{i=1}^{T}n_i)$ \label{line:usl_type}
\State \textbf{break} \label{line:break}
\EndIf
\State $n_m \gets n_m+1$\label{line:add2}
\State  $ \scriptstyle marginal\_gain[m] \gets \delta(d_m,n_m)$ \label{line:update_gains}

\EndWhile
\end{algorithmic}
\end{algorithm}

\begin{theorem}
Alg.~\ref{alg:task_alloc} is optimal for all scalability functions of task performance $C(d,n)$, such that the marginal gain $\delta(d,n)$ is decreasing in $n$.
\end{theorem}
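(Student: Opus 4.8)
The plan is to prove optimality by an exchange argument combined with the multiplicative structure of the collective performance from Eq.~\eqref{eqn:mult}. Since $C(\mathbf{D},\mathbf{N}) = \prod_i C(d_i,n_i)$ and every factor is positive, maximizing $C$ is equivalent to maximizing $\log C(\mathbf{D},\mathbf{N}) = \sum_i \log C(d_i,n_i)$. Observe from Eq.~\eqref{eq:overall-performance} that the \emph{relative} gain of adding one agent to task~$i$ is exactly $1+\delta(d_i,n_i)$, so in log-space the incremental gain of moving from $n_i$ to $n_i+1$ is $\log(1+\delta(d_i,n_i))$. The key point is that the hypothesis ``$\delta(d,n)$ is decreasing in $n$'' makes $\log(1+\delta(d_i,n_i))$ decreasing in $n_i$ as well; hence each $\log C(d_i,\cdot)$ is a discrete concave function of the integer $n_i$, and $\log C(\mathbf{D},\cdot)$ is a sum of such functions over a simplex-type constraint $\sum_i n_i = N$ (with $n_i\ge 1$). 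This is precisely the classical setting in which the greedy ``assign to the task with largest marginal increment'' rule is optimal.

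First I would set up the reduction to log-space and restate the goal as maximizing $F(\mathbf{N}) := \sum_{i=1}^T g_i(n_i)$ with $g_i(n) := \log C(d_i,n)$, subject to $\sum_i n_i = N$, $n_i \ge 1$. Next I would record the concavity lemma: because $\delta(d_i,\cdot)$ is decreasing, the forward differences $g_i(n+1)-g_i(n) = \log\bigl(1+\delta(d_i,n)\bigr)$ are non-increasing in $n$; I would note that Alg.~\ref{alg:task_alloc} (ignoring the $\varepsilon$-termination, which I treat separately) is exactly the greedy procedure that, starting from $\mathbf{N}=(1,\dots,1)$, repeatedly increments the coordinate with the largest current forward difference. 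Then I would carry out the exchange argument: let $\mathbf{N}^{g}$ be the greedy output and $\mathbf{N}^{*}$ any optimal allocation; if they differ, there must be coordinates $i,j$ with $n^{g}_i > n^{*}_i$ and $n^{g}_j < n^{*}_j$. Using concavity of $g_i$ and $g_j$ together with the order in which greedy made its choices, I would show that transferring one agent in $\mathbf{N}^{*}$ from task $j$ to task $i$ does not decrease $F$, and strictly reduces the $\ell_1$-distance to $\mathbf{N}^{g}$; iterating transforms $\mathbf{N}^{*}$ into $\mathbf{N}^{g}$ without ever decreasing $F$, so $\mathbf{N}^{g}$ is optimal. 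A clean way to make the ``greedy order'' bookkeeping rigorous is to index the $N-T$ increments greedy performs as $\gamma_1 \ge \gamma_2 \ge \cdots$ (monotone by concavity: once a task's marginal value drops it stays dropped, and greedy always picks the global max), and to compare against the sorted multiset of all increments any allocation ``uses''; greedy provably grabs the $N-T$ largest, which by concavity is feasible and hence optimal.

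Finally I would handle the $\varepsilon$-threshold and the idle pool $n_{T+1}$. When $\varepsilon > 0$, the objective effectively becomes maximizing $F(\mathbf{N}) - \varepsilon'\cdot(\text{number of deployed agents beyond }T)$ in an appropriate sense, or more simply: once every task's current marginal gain is $\le \varepsilon$, concavity guarantees it stays $\le \varepsilon$ for all further increments, so no additional deployment can raise the ``gain net of cost,'' and parking the rest in $T+1$ is optimal; for $\varepsilon = 0$ this degenerates to the pure case above (the break can only trigger if all marginal gains are exactly $0$, which changes nothing). I would also state explicitly the standing feasibility requirement $T \le N$ so that the initialization $(1,\dots,1)$ is valid, and note that concavity of $\delta$ in $n$ is automatically consistent with the three model families (GL, CJT, USL with $\beta\ge 0$) as claimed earlier, so the theorem applies to all of them.

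The main obstacle I anticipate is making the exchange step fully rigorous \emph{in log-space rather than directly on the product}: one must be careful that ``$\delta$ decreasing'' is the right hypothesis (it is, since $\log$ is increasing, so $\log(1+\delta)$ inherits monotonicity), and that the exchange inequality $g_i(n^{*}_i+1) - g_i(n^{*}_i) \ge g_j(n^{*}_j) - g_j(n^{*}_j-1)$ genuinely follows from comparing the greedy pick order with concavity — this requires arguing that at the moment greedy incremented task $i$ up to $n^{g}_i$, the marginal value it saw was at least every marginal value still ``available'' on task $j$ at level $n^{*}_j$, which is where the monotone-sorted-increments formulation earns its keep. Everything else (the log reduction, the idle-pool argument, feasibility) is routine once that lemma is in place.
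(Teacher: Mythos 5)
Your proposal is correct and follows essentially the same route as the paper: a greedy exchange argument showing that any allocation differing from the algorithm's output can be transformed into it, one agent transfer at a time, without decreasing collective performance, with the decreasing marginal gain $\delta(d,n)$ supplying the key inequality between the gain acquired and the gain given up. Working in log-space with $\sum_i \log C(d_i,n_i)$ is only a cosmetic repackaging of the paper's multiplicative update $C(\mathbf{D},\mathbf{M}') = \frac{1+\delta(d_1,m_1)}{1+\delta(d_2,m_2-1)}\,C(\mathbf{D},\mathbf{M})$, and your extra treatment of the $\varepsilon$-threshold and idle pool goes slightly beyond what the paper's proof explicitly covers but is consistent with it.
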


\begin{proof}
Let $\mathbf{N} = (n_1,\ldots,n_T)$ be the computed solution of Alg.~\ref{alg:task_alloc} and let $\mathbf{M} = (m_1,\ldots,m_T) \neq \mathbf{N}$ be some solution with $C(\mathbf{D},\mathbf{M}) \ge C(\mathbf{D},\mathbf{N})$.
Without loss of generality, assume $n_1 > m_1$ and $n_2 < m_2$.
Since Alg.~\ref{alg:task_alloc} selects in every step the task~$i$ maximizing the marginal gain and the gain function~$\delta(\cdot,\cdot)$ is monotonically decreasing in the second parameter, we have
$$
\delta(d_1,m_1) \ge \delta (d_1,n_1-1) \ge \delta(d_2,n_2)\ge\delta(d_2,m_2-1)\;.
$$
Now consider the solution $\mathbf{M}' \dfn (m_1+1, m_2-1, m_3,\ldots,m_T)$.
Eq.~\ref{eq:overall-performance} implies
$$
C(\mathbf{D},\mathbf{M}') = \frac{1+\delta(d_1,m_1)}{1+\delta(d_2,m_2-1)} C(\mathbf{D},\mathbf{M}) \ge C(\mathbf{D},\mathbf{M})\;.
$$
Hence, by repeating the above procedure, we can construct the solution $\mathbf{N}$ from $\mathbf{M}$ without decreasing the resulting collective performance.
This implies $C(\mathbf{D},\mathbf{M}) = C(\mathbf{D},\mathbf{N})$ and the theorem follows.
\end{proof}

We analyze the running time of Alg.~\ref{alg:task_alloc}.
In order to provide efficient access to the current maximal gain, the marginal gain values can be stored in a priority queue.
In this case, the initialization in lines \ref{line:init_gain} to \ref{line:end_init_gain} needs $\mathcal{O}(T\log(T))$.
The loop in lines \ref{line:mainloop} to \ref{line:add2} is executed $N-T$ times.
In each iteration, the algorithm needs to select and update the current maximal gain. 
We define $D(n_i)$ as the time needed to compute the $\delta(d_i,n_i)$. Since this time is at most $D(N)$, we can use $D(N)$ as a general upper bound in our complexity analysis, that is, $D(n_i)\le D(N)$. 
The extraction of the current maximum and insertion of the updated value is $\mathcal{O}(\log T)$.
In total, Alg.~\ref{alg:task_alloc} runs in time $\mathcal{O}(ND(N)\log(T)).$
In particular, if the marginal gain can be updated in constant time, the running time simplifies to $\mathcal{O}(N\log(T)).$

\subsection{Influence of Different Scalability Types}\label{sec:spec-case}

We analyze each of the three scalability functions---linear, saturating, and retrograde---in detail. For each, we examine how agents are optimally allocated and highlight the specific aspects or conditions required to guarantee the optimality of our algorithm.

\paragraph{Linear Scalability} Recall that the linear performance curve is defined as $C_\text{GL}(\lambda_i, n_i) = n_i - \lambda(n_i-1)$ where $\lambda_i \in [0,1]$ is the scalability parameter of task $i$.
The corresponding marginal gain is $\delta(\lambda_i, n_i) = \frac {1-\lambda_i} {n_i(1-\lambda_i) + \lambda_i}$.
Since this function is decreasing in~$n_i$, we can apply Alg.~\ref{alg:task_alloc} to compute the optimal task allocation.
Maximizing~$\delta(\lambda_i,n_i)$ is equivalent to selecting task~$i$ that minimizes the inverse $\frac1{\delta(\lambda_i,n_i)} = n_i + \frac{\lambda_i}{1-\lambda_i}$.

In the special case that $\lambda_i < \frac12$ for all tasks~$i$, we have $\frac{\lambda_i}{1-\lambda_i} < 1$, and the optimal solution corresponds to distributing agents evenly across tasks, with any remaining agents assigned first to those tasks with lower~$\lambda_i$.
In the general case, tasks with $\lambda_i \ge \frac12$---tasks with a large fraction of non-parallelizable work---receive fewer agents.
More precisely, consider two tasks with $\lambda_i > \lambda_j$.
The algorithm only selects task $i$ over task $j$ if $\delta(\lambda_i,n_i) > \delta(\lambda_j,n_j)$, which is equivalent to $n_j - n_i > \frac{\lambda_i-\lambda_j}{(1-\lambda_i)(1-\lambda_j)}$.
This means, if enough agents are available, the total difference in the number of assigned agents to both tasks is given by $\frac{\lambda_i-\lambda_j}{(1-\lambda_i)(1-\lambda_j)}$ (rounded up or down).
Therefore, the total difference in the number of assigned agents between tasks $i$ and $j$ is bounded by a constant determined solely by their parameters $\lambda_i$ and $\lambda_j$.
As the total number of agents $N$ increases, this fixed difference becomes relatively negligible, and the optimal allocation approaches a uniform distribution across tasks.

\paragraph{Saturating Scalability} We analyze the CJT curve~$C(p_i,n_i)$ defined in Eq.~\eqref{eq:cjt}, which models collective decision-making performance as a function of the number of agents $n_i$ and the individual accuracy probability~$p_i$. Notably, adding a second agent to make $n_i$ even does not improve performance over adding a first agent to make $n_i$ odd. 
One can formally verify that $C(p_i,n_i+1)=C(p_i,n_i)$ for odd~$n_i$.
This implies that the marginal gain $\delta(p_i,n_i) = 0$ for odd~$n_i$ and hence, the marginal gain function is not monotonically decreasing when agents are added one at a time.

To resolve this issue and preserve optimality, we modify the allocation process to assign agents in pairs.
This leads to a reformulated scalability function, defined as $\tilde{C}_\text{CJT}(p,k) =_{\rm def} C_\text{cond}(p,2k-1)$, where $(k-1)$ is the number of agent pairs assigned to a task.
When using this function, if $n-T$ is odd, one agent remains unallocated and can either be assigned arbitrarily or left unallocated, as it does not improve the collective performance defined in Eq.~\eqref{eq:overall-performance}.

Direct evaluation of the function $\tilde{C}_\text{CJT}$ is computationally costly and numerically unstable as $n_i$ increases.
To overcome this, we derive an incremental method, based on marginal gain, which requires constant computation times.
We assume that all individual accuracy probabilities~$p_i$ satisfy $1/2\le p_i\le 1$  in every task $i$.

By decomposing the binomial coefficients, the absolute gain function becomes:
\begin{equation}
\Delta(p,k)= {2k-1\choose k-1}(2p-1)(p(1-p))^k \label{eq:absolute-gain-explicit}\;.
\end{equation}
From this, we derive a recursive formula for the marginal gain $\delta(p,k)$, valid for $k>1$:
\begin{equation}\displaystyle\delta(p,k)=2(2-\frac{1}{k})p(1-p) \frac{\delta(p,k-1)}{1+\delta(p,k-1)}
\label{eq:marginal-gain}\;.
\end{equation}
This recurrence allows marginal gains to be computed efficiently, in constant time.

Finally, since $2(2-1/k) \le 4$ and $p(1-p) \le 1/4$, the marginal gain $\delta(p,k)$ is guaranteed to be monotonically decreasing in $k$. Therefore, Alg. \ref{alg:task_alloc} computes the optimal task allocation under saturating scalability.

\paragraph{Retrograde Scalability} 

Unlike the linear and saturating functions where collective performance increases monotonically with $n_i$, the USL function $C_\text{USL}(\alpha_i, \beta_i, x)$ of Eq.\,\eqref{eqn:usl} reaches a maximum at $n_i=\sqrt{(1-\alpha_i)/\beta_i}$. 
This introduces a qualitative difference: adding more agents not only yields diminishing returns but eventually becomes detrimental to performance, as the marginal gain $\delta<0$. 
In the discrete setting, the maximum is attained at either $n_i = \lfloor\sqrt{(1-\alpha_i)/\beta_i}\rfloor$ or $n_i = \lceil\sqrt{(1-\alpha_i)/\beta_i}\rceil$.
To achieve maximum performance, no additional agents are assigned to a task once the marginal gain $\delta(\alpha_i,\beta_i,n_i)$ becomes negative.

As a result, some agents may remain unallocated if all tasks have reached their respective performance peaks (i.e., saturation threshold). In Alg.~\ref{alg:task_alloc}, line~\ref{line:check} checks whether all marginal gains are non-positive under the current swarm allocation. If so, the remaining agents are allocated to the idle pool (task~$T+1$; see line \ref{line:usl_type}), finalizing the allocation process.

The marginal gain~$\delta(\alpha_i,\beta_i,n_i)$ is decreasing as long as the second derivative of $C_\text{USL}$ is non-positive.
If $\alpha_i\ge\beta_i$, the USL curve remains concave---without inflection points---within the interval $(0,\sqrt{(1-\alpha_i)/\beta_i})$, ensuring that Alg.~\ref{alg:task_alloc}  computes the optimal task allocation under retrograde scalability.

\section{Empirical Results}

We first verify the optimality of Alg.\,\ref{alg:task_alloc} by studying the allocation of robots across the collective decision-making tasks described in Sec.\,\ref{sec:case-study}. By considering only a limited number of tasks ($T=2$ and $T=3$) and agents ($N \le 150$), we can exhaustively explore the solution space and compute the optimal solution. A~small $T$ allows a simpler visualization of the possible allocations including the optimal one.
Then, in Sec.\,\ref{sec:res-many-tasks}, we demonstrate the possibility of scaling to a larger number of tasks (up to~$T=8$) and agents (up to $N=2000$). Although our algorithm can efficiently handle significantly more tasks ($T\gg8$), we limit the presentation to $T=8$ for clarity. In all scenarios, we set $\varepsilon=0$, meaning that there is no cost in deploying robots.

\subsection{Task Allocation Results with a Limited Number of Tasks}
\label{sec:res-task-alloc}

We test Alg.~\ref{alg:task_alloc} using results from our robot swarm simulations (Sec.~\ref{sec:majority_des}) to study the optimal allocation of $N=30$ robots among $T=2$ decision-making tasks exhibiting saturating scalability as shown in Fig.~\ref{fig:single-task-fill}. Each task~$i$ is characterized by its decision difficulty, that is, the difficulty for the robot to make a correct individual estimate. As shown in Sec.~\ref{sec:majority_des}, agents' individual decision accuracy depends on environmental factors, such as fill ratio~$f_i$ and feature correlation (e.g., patches of tiles with the same color). We define task~$T_1$ as either more difficult than task~$T_2$ or equally difficult, with difficulty determined by the fill ratios ($f_1\le f_2$), that is, agents have lower or equal individual decision accuracy~$p_1 \le p_2$. We use the individual accuracy value~$p$, obtained from robot simulations, to compute the group accuracy for task~$i$ using the saturating scalability curve $C_\text{CJT}(p_i,n_i)$ defined in Eq.\,(\ref{eq:cjt}), which closely fits the robot simulation data. 
The overall collective performance of the swarm of $N$~agents---see Eq.~\eqref{eqn:mult}---is computed by multiplying the group accuracies across all decision tasks. 

We present results for all possible allocations of the $N=30$ agents across the $T=2$ tasks (see Fig.~\ref{fig:mixed_task}). 
The collective accuracy shows an inverted U-shape that, with the peak position varying across task pairs---resulting in different maximum collective performances. 
Fig.~\ref{fig:mixed_task} shows that the allocation yielding the highest collective performance (circle) always matches the result of Alg.~\ref{alg:task_alloc} (cross). As expected, the optimal allocation consists shifts more agents towards the more difficult task, while in the symmetric case, the best solution is to divide the agents equally between the two tasks.

We extend the analysis to the allocation of $N=30$ or $N=150$ agents across $T=3$ decision-making tasks. Task~$T_1$ is always the most difficult, task~$T_2$ has medium difficulty, and task~$T_3$ is the simplest (fill ratios $f_1 < f_2 < f_3$). We show the results in Fig.~\ref{fig:ternary}(left) as color maps in ternary plots. 
Interestingly, the results for $N=30$ (left column) deviate from the trend observed in the case of $T = 2$~tasks. Contrary to intuitive expectations, the collective performance is not maximized when the allocation is biased towards the most difficult tasks. In the plots of Fig.~\ref{fig:ternary}(left), this would correspond to a region of high collective performance (dark red) near the bottom right corner associated with task $T_1$ (fill ratio $f_1=0.51$), which is not observed. 

In the results for swarm size $N=150$ (Fig.~\ref{fig:ternary}, right), the color maps exhibit a clear shift toward $T_1$, aligning more closely with our initial intuition. As swarm size increases, the optimal allocation becomes increasingly biased toward the more difficult tasks.
This shift results from the non-linearity of the CJT (Fig.~\ref{fig:single-task-fill}(b)) with respect to group size~$n$, which makes optimal allocation dependent on swarm size. These results highlight the importance of an efficient algorithm, as the optimal agent allocation must be recomputed for each swarm size, number of tasks, and the specific scalability functions associated with those tasks.

Fig.~\ref{fig:ternary} also shows that the allocation of our algorithm (cross symbol) always matches one of the allocations leading to maximum collective accuracy (circle symbols). For the $T=3$ tasks results, note that there are three optimal solutions and our algorithm always returns one of them. The presence of multiple equivalent optimal solutions is the consequence of the stepwise nature of CJT-based accuracy, where adding one agent to an odd group does not change the collective accuracy (see Supplementary Fig.~S4 for additional task allocation results). 

Fig.~\ref{fig:ternary_trend} clearly illustrates the size-dependent shift in allocation, with swarm sizes varying as 
$N\in\{5,10,15,\dots,1000\}$. The crosses in Fig.~\ref{fig:ternary_trend} indicate the optimal agent allocations computed by our algorithm, shown as proportions of the swarm ($\frac{n_i}{N} \in [0,1]$). As swarm size increases, agent allocation becomes increasingly biased in favor of the more difficult task ($T_1$ with $f_1=0.51$).  For tasks exhibiting saturating scalability, the optimal allocation may not be a fixed proportion of the swarm size.

\begin{figure}[t]
  \centering
  \includegraphics[width=1\linewidth]{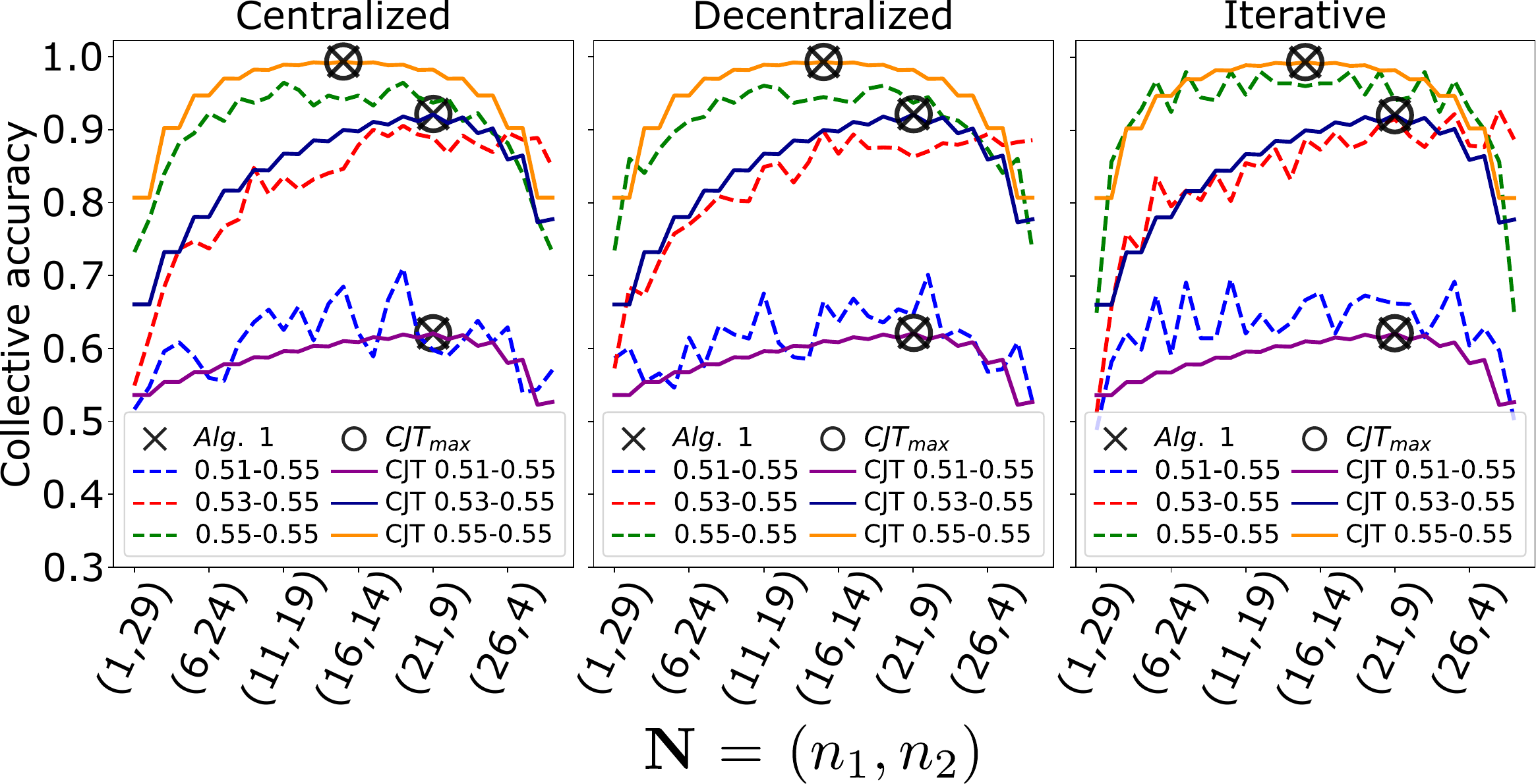}
  \caption{Collective performance for two-task environments ($T=2$) with fill ratios $(f_1, f_2) \in \{(0.51,0.55),(0.53,0.55), (0.55,0.55)\}$. We set $f_1\le f_2$; hence, task $T_1$, with fill ratio $f_1$, is the most difficult (or equally difficult) task compared to task $T_2$ with fill ratio $f_2$. On the x-axis, we vary the agent allocation $\mathbf{N}=(n_1,n_2)$ to the two tasks. Solid lines are obtained from the CJT (Eq.\,\eqref{eq:cjt}), and dashed lines show the results from robot simulations (250 repetitions per condition, data from Fig.~\ref{fig:single-task-fill}). The left, center, and right plots show the results for the centralized, decentralized, and iterative controllers, respectively. The cross marker indicates the agent allocation computed by Alg.~\ref{alg:task_alloc}, while the circle marker represents the maximum performance predicted by the CJT. In all scenarios, the cross and the circle coincide.}
  \label{fig:mixed_task}
\end{figure}

\begin{figure}[t]
  \centering
  \includegraphics[width=1\linewidth]{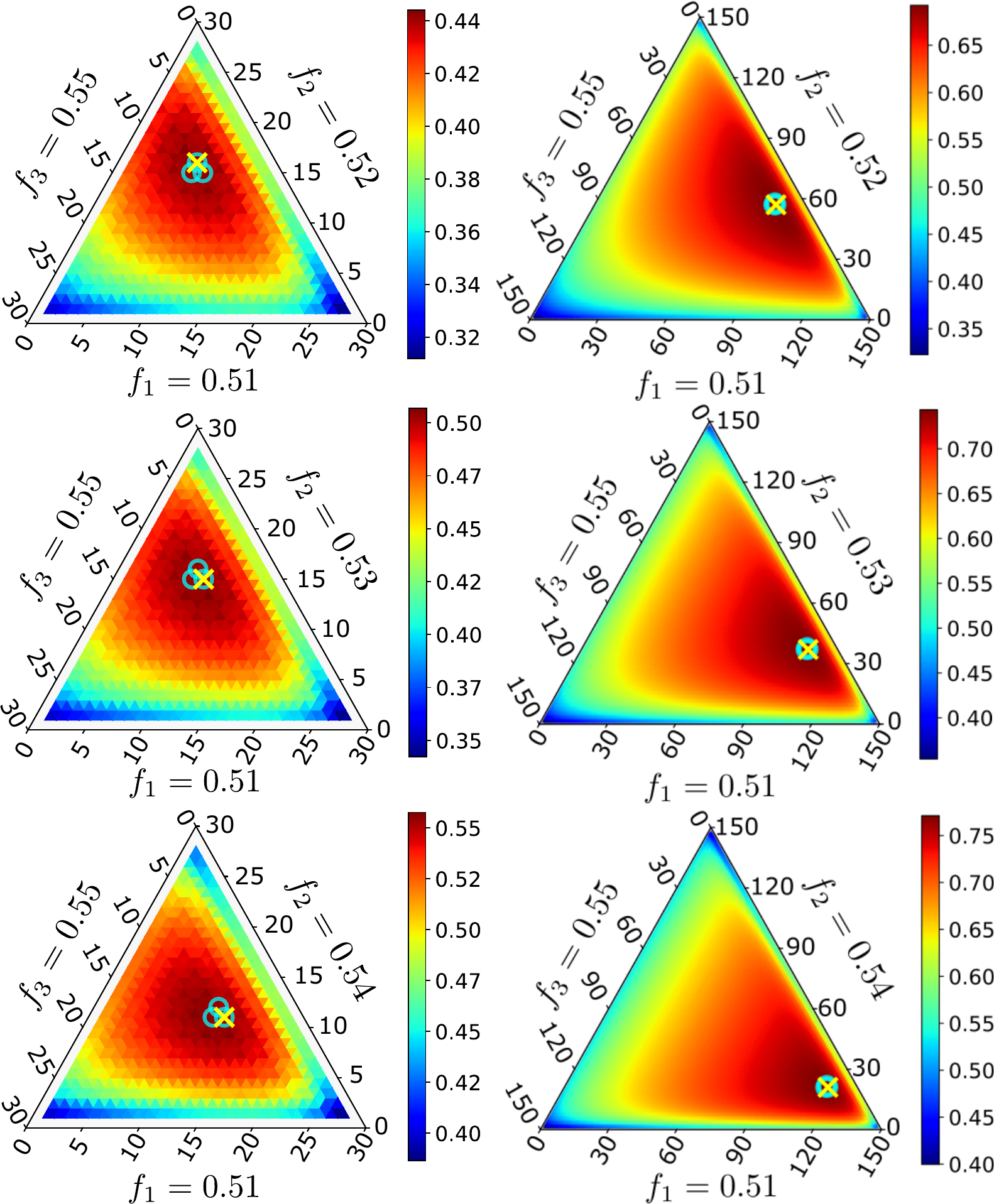}
  \caption{Three scenarios for the allocation of agents to $T=3$ collective decision-making tasks $(T_1,T_2,T_3)$ with fill ratios $(f_1,f_2,f_3)\in \{(0.51, 0.52,0.55), (0.51, 0.53,0.55), (0.51,0.54,0.55)\}$. We set $f_1 < f_2 < f_3$, hence task $T_1$ is always the most difficult, $T_2$ has intermediate difficulty, and $T_3$ is the easiest. The color map shows the collective performance computed using Eq.~\ref{eqn:mult} (see bar legends) for (left) small swarms with $N=30$ agents and (right) large swarms with $N=150$ agents. Blue circles mark the agent allocation yielding the highest performance, while the yellow cross indicates the allocation computed by our algorithm, which always coincides with one of the blue circles.}
  \label{fig:ternary}
\end{figure}

\begin{figure}[t]
  \centering
  \includegraphics[width=0.8\linewidth]{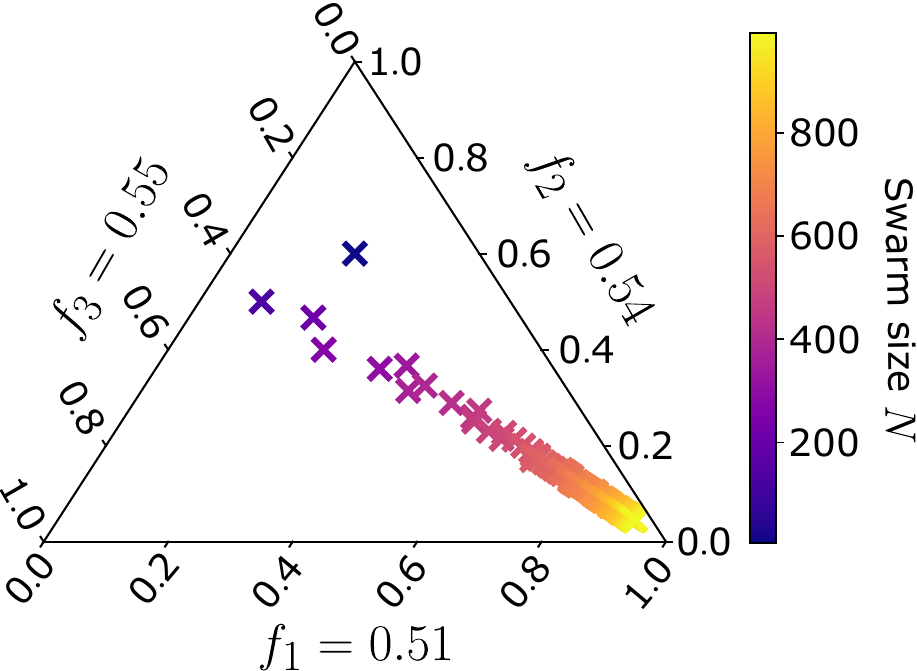}
  \caption{Allocation of agents to $T=3$ tasks  $(T_1,T_2,T_3)$ with fill ratios $(f_1,f_2,f_3)=(0.51,0.54,0.55)$. Crosses represent the optimal agent allocations computed by Alg.~\ref{alg:task_alloc} for swarm sizes $N \in \{5,10,15\dots,1000\}$, with each cross corresponding to a different swarm size. Axes of the ternary plot indicate the proportion of agents allocated to each respective task.}
  \label{fig:ternary_trend}
\end{figure}

\begin{figure*}[tb]
  \centering
  \includegraphics[width=0.8\linewidth]{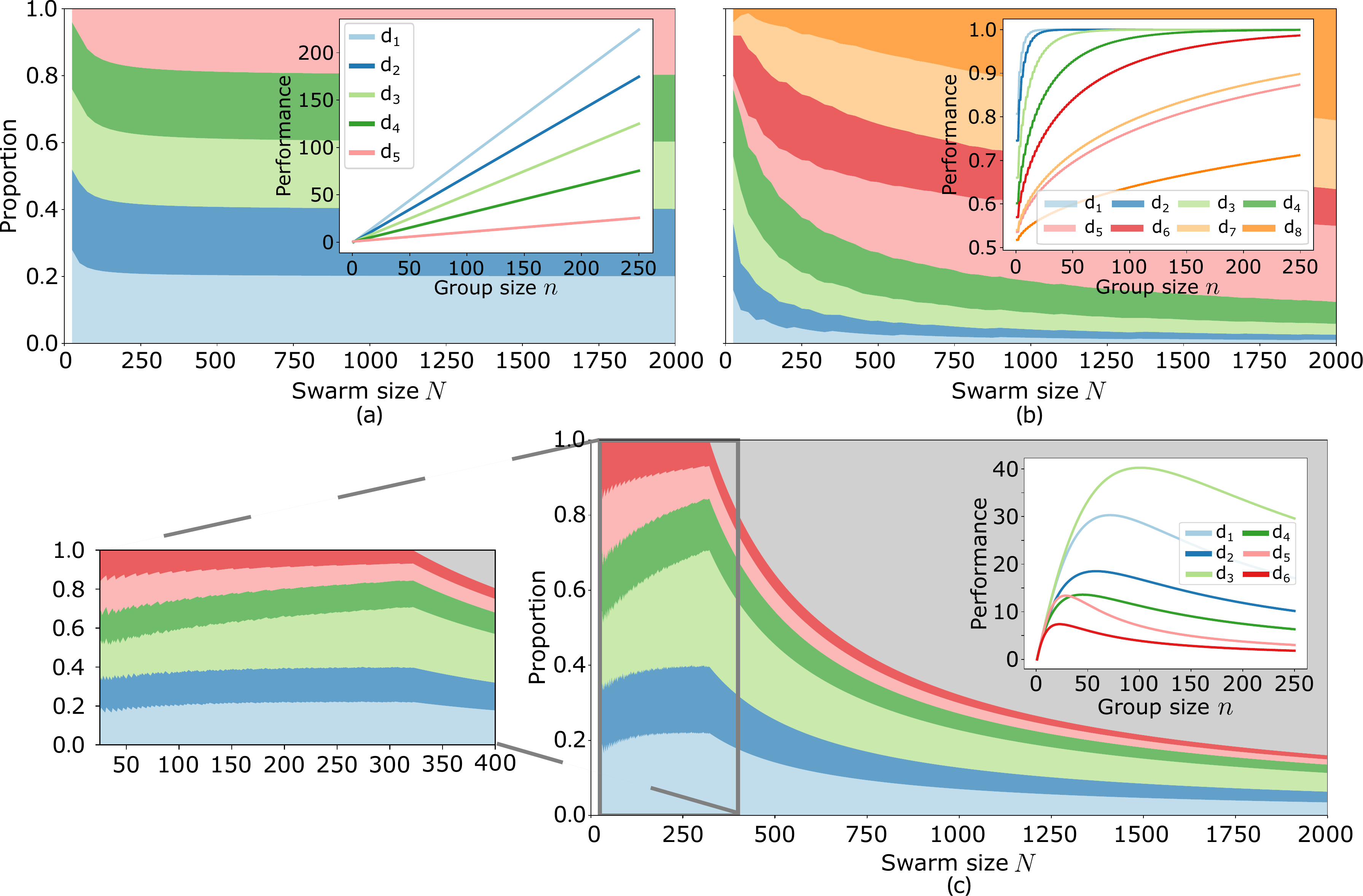}
  \caption{Stacked plots with linear, saturating and retrograde scalability performance curves. The stacked plot represents the proportion of agents assigned to each task, with colors corresponding to the tasks shown in the line plot for consistency. The light-gray color in the stacked plot represents the portion of unallocated agents. Line plots show the individual performance of each task with respect to swarm size. (a) The allocation of agents to linear $T=5$ tasks $(T_1,T_2,T_3,T_4,T_5)$ with $d_i = \lambda_i,\; \text{for } i \in \{1, 2, 3, 4, 5\}$ where $(d_1,d_2,d_3, d_4, d_5) \in \{0.1, 0.3, 0.5, 0.7, 0.9\}$. (b) The allocation of agents to saturating $T=8$ tasks $(T_1,T_2,T_3,T_4,T_5, T_6, T_7, T_8)$ with $d_i = p_i, \; \text{for } i \in \{1, 2, 3, 4, 5, 6, 7, 8\}$ where $(d_1,d_2,d_3, d_4, d_5, d_6, d_7, d_8) \in \{0.8069, 0.7454, 0.6603, 0.6017, 0.5361, 0.5698, 0.5402, 0.5177\}$ and line plots are presented in Fig.~\ref{fig:single-task-fill}(b) and Fig.~\ref{fig:single-task-dist}(b). (c)The allocation of agents to retrograde $T=6$ tasks $(T_1,T_2,T_3,T_4,T_5,T_6)$ with $d_i = (\alpha_i, \beta_i), \; \text{for } i \in \{1, 2, 3, 4, 5, 6\} \in \mathbb{Z}$ where $(d_1,d_2,d_3, d_4, d_5, d_6)\in \{(0.005,0.0002), (0.02,0.0003), (0.005,0.0001), (0.03,0.0005), \allowbreak (0.004,0.0013), (0.05,0.002)\}$.}
  \label{fig:stacked_all}
\end{figure*}

\subsection{Task Allocation Results with Many Tasks}
\label{sec:res-many-tasks}

One of the key strengths of our algorithm is its scalability, which allows it to handle scenarios involving multiple tasks and large numbers of agents efficiently. We analyze three scenarios (corresponding to linear, saturating, and retrograde scalability) and present the optimal allocation results for swarm sizes up to $N=2000$ agents, with $N \in \{25,26,27,\dots,2000\}$. 
Fig.~\ref{fig:stacked_all} shows the results as stacked plots, where each layer indicates the proportion of agents allocated to each task (i.e., $n_i/N$) as a function of swarm size $N$. Colors correspond to the scalability functions reported in the respective insets, except for the gray layer, which represents agents allocated to the idle pool (i.e., not assigned to any task). Idle agents only appear in panel (c) for tasks with retrograde scalability.

In Fig.~\ref{fig:stacked_all}(a), we analyze the linear case, considering $T = 5$ tasks with scalability parameters $(d_1,d_2,d_3, d_4, d_5) \in \{0.1, 0.3, 0.5, 0.7, 0.9\}$. Because $d_1$, $d_2$, and $d_3$ are smaller than $0.5$, the optimal allocation consists of an equal distribution of agents among $T_1$, $T_2$, and $T_3$. In contrast, since $d_4$ and $d_5$ are larger than $0.5$, fewer agents are allocated to $T_4$ and $T_5$ than the other three tasks (see also discussion in Sec.~\ref{sec:spec-case}). This difference in optimal group sizes among tasks is relatively small in the number of allocated agents (about 6 to 8 agents). Because the stacked plot reports proportions of agents, this small difference is graphically accentuated for small swarm sizes. For example, for $N=25$, the difference between $n_1$ and $n_5$ is 6 agents, corresponding to $24\%$ of the swarm; instead, for $N=2000$, the difference $n_1-n_5$ is 8 agents, corresponding to $0.4\%$ of $N$ only. However, except for small swarms, the optimal group sizes $n_i$ are approximately a constant proportion $n_i=1/T$.

In Fig.~\ref{fig:stacked_all}(b), we analyze the saturating case (i.e., diminishing returns), considering $T=8$ tasks with scalability parameters~$d_i$ measured from our robot simulations and reported in Tab.~\ref{tab:individualAcc} as individual accuracies~$p_i$. Similar to the results of Sec.~\ref{sec:res-task-alloc}, we observe a size-dependent task allocation, meaning that the optimal allocation is not a constant proportion of the swarm size but changes as~$N$ increases. These results are a consequence of different values of~$p_i$ (see inset), leading to marginal gains decreasing differently across tasks. Fig.\,\ref{fig:stacked_all}(b) illustrates these results as stacks changing height (y-axis) as a function of $N$ (x-axis). When allocating small swarms, the largest groups are assigned to tasks with the highest~$p_i$ values (i.e., the easiest tasks) because they provide the highest marginal gain due to a steeper increase in performance compared to more difficult tasks with lower $p_i$. In contrast, when allocating large swarms, the more difficult tasks receive the largest proportion of agents because the easiest tasks have already approached the maximum performance (e.g., blue tasks in the inset) and provide minimal marginal gain compared to the harder tasks.

In Fig.~\ref{fig:stacked_all}(c), we analyze the retrograde case considering $T=6$ tasks illustrated in the inset. A~distinctive characteristic of retrograde tasks is the presence of a peak performance at a given group size~$n_i$. Beyond this point, assigning more agents becomes detrimental and reduces the task performance, corresponding to a negative marginal gain. 
When all tasks have negative marginal gain, $\forall i, \delta(d_i,n_i) < 0$, or, more precisely, smaller than the robot deployment cost $\varepsilon$ (assumed to be $\varepsilon=0$ for simplicity), our algorithm stops allocating agents to tasks and all unallocated agents are assigned to the idle pool (gray stack in Fig.~\ref{fig:stacked_all}(c)). In the tested scenario, that point is reached at swarm size~$N=322$, beyond which the task allocations across the $T$ tasks remain unchanged, and any additional agents are assigned to $T_{T+1}$ only.

\section{Conclusion}

Computing optimal task allocations for multi-agent systems is essential for maximizing overall performance. However, it can be challenging and computationally expensive, especially when dealing with large numbers of agents and tasks. In large-scale systems, evaluating all possible allocations and selecting the best one becomes computationally infeasible due to combinatorial explosion. We propose a polynomial-time algorithm for optimally allocating any number of agents to any number of tasks, applicable to any concave scalability function describing how performance depends on group size.

In our study, we model tasks by taking inspiration from the performance scaling principles of parallel computing systems. We run a series of analyses both to showcase the optimality of our algorithm and to study how optimal task allocations change for different task scalability functions and group sizes. We find that in linear and retrograde scalability, task allocation strategies are generally intuitive: linear tasks lead to an approximately even distribution of agents, while retrograde tasks require allocating agents up to their performance peak, leaving any excess agents unassigned. However, saturating performance functions---and retrograde tasks before reaching their peak---present unique challenges. The diminishing returns characteristic of these curves require a more refined allocation strategy.

In addition to multi-agent simulations, we also run a series of experiments with simulated robot swarms performing a benchmark case study in swarm robotics: collective perception of environmental features~\cite{valentini2016collective, ebert2020bayes, zakir2022robot}. This is a type of collective decision-making scenario where agents select the predominant environmental feature through majority decisions. We model the different task difficulties by changing either the feature's frequency (i.e., fill ratio) or the feature's spatial correlation~\cite{bartashevich2021multi} (i.e., sizes of unicolor patches). These collective decision-making tasks scale with swarm size with a saturating curve---following the Condorcet’s Jury Theorem (CJT)---when there is no interference among robots. We also show that when we include body collisions, physical interference among robots changes the scalability function to retrograde. The swarm robotics simulations showcase the relevance of the considered scalability functions for distributed systems and the importance of task allocation for improving the performance of large-scale systems.

We naively assumed that it would always be optimal to bias the allocation of agents in favor of the most difficult tasks. Our initial intuition was based on results from social science on human behavior~\cite{cassey2013adaptive}, where the best allocation of resources (time) to each task is proportional to the task difficulty; however, our analysis contradicts these earlier findings. On the contrary, our theoretical and computational analyses indicate that this solution is only optimal when resources are abundant (i.e., in large swarms). In contrast, in small swarms, assigning more agents to the most difficult task can be suboptimal. These results highlight the importance of an efficient algorithm for task allocation because the optimal solution is not a linear rescaling of smaller-scale systems. Instead, the best allocation should be recomputed as system size, number of tasks, or task difficulties change.

Allocating agents to different tasks is relevant and important in both engineering and biology. For example, in swarm robotics, system efficiency can be improved by distributing the workload among the robots. Assuming different tasks are performed using different types of sensors, task allocation can also be exploited for cost-effective design of robot swarms, where sensors are allocated as a function of each task's requirements. 
Modeling this problem could also help in understanding biological systems, for example, the composition of mixed-species groups~\cite{goodale2017mixed}, in which animals of different species cooperate by contributing complementary sensing capabilities to the group (e.g., for predator detection).

The current study considers an offline allocation setting with homogeneous agents, centralized computation, independent tasks, and known scalability functions. While these assumptions enable optimality guarantees and polynomial-time complexity, they abstract away some challenges inherent to real-world swarm systems, including agent heterogeneity, stochastic task dynamics, partial observability, and communication constraints. A~natural extension is to replace the predefined scalability functions with online performance models learned from locally observed task performance, enabling adaptive task allocation under uncertainty.

Future research can also extend the present study to include heterogeneity among agents and tasks. Our algorithm assumes homogeneous agents, where any agent contributes equally to a task, and it could be potentially generalized to optimally allocate swarms of heterogeneous agents---i.e., certain robots have better capabilities than others in performing the task (e.g., sampling the environment and making the correct decision~\cite{hoeffding1956distribution}). Our algorithm can also be extended---by merely changing the marginal gain calculation step---to handle concurrently tasks with different types of scalability functions. These future extensions could broaden the applicability of our method to real-world applications with heterogeneous tasks and agents. Our framework applies to any concave scalability function. While we mainly focus in linear, saturating and retrograde functions, other types of concave functions (e.g., logistic growth) would lead to similar allocation patterns. The critical property is concavity, which ensures diminishing marginal gains and keeps the optimization computationally feasible. Non-concave functions could exhibit multiple local optima and would require different algorithmic tools, which we leave for future work.

We hope that our work not only provides a foundation for optimal task allocation in scalable multi-agent systems but also helps improve approaches for task allocation in swarm robotics and collective decision-making. There are many open questions about dynamically changing scalability functions, online task allocation, group sizes changing at runtime (e.g., due to broken robots), and a fully decentralized approach. Our presented insights may inspire cross-disciplinary research, bridging robotics, distributed computing, social science, and biological systems. We believe our approach is just the beginning of research on modern techniques for complex task allocation problems for the next generation of intelligent multi-robot systems, unlocking new possibilities for large-scale real-world deployment of robot systems.

\section*{Acknowledgments}
This work has been partially supported by the DFG under Germany's Excellence Strategy, EXC 2117, 422037984 (H.H. and A.R.) and by the Hector Foundation~II.
We thank Samer Al-Magazachi for kindly providing us with the source code of his Python-based robot swarm simulator `Swarmy.'

\bibliographystyle{ieeetr} 
\bibliography{references}

\end{document}